\newtheorem{theorem}{Theorem}
\newtheorem{lemma}[theorem]{Lemma}
\newtheorem{proposition}[theorem]{Proposition}
\newtheorem{corollary}[theorem]{Corollary}
\newtheorem{definition}{Definition}
\newcommand\E{\mathbb{E}}
\newcommand{\cS}{\mathcal{S}}
\newcommand{\cA}{\mathcal{A}}
\title{
Near Sample-Optimal Reduction-based Policy Learning for Average Reward MDP %with a Generative Model 
}
\small\author{%
  Jinghan Wang\\
   %School of Mathematical Sciences\\
  Peking University\\
  \texttt{wangjinghan@pku.edu.cn} 
   \and
   Mengdi Wang \\
    %Department of Electrical and Computer Engineering \\
   Princeton University \\
   \texttt{mengdiw@princeton.edu}
   \and 
   Lin F.~Yang \\
   %Department of Electrical and Computer Engineering \\
   University of California, Los Angeles \\
   \texttt{linyang@ee.ucla.edu}
}}
\begin{document}

\maketitle

\begin{abstract}
This work considers the sample complexity of obtaining an $\varepsilon$-optimal policy in an average reward Markov Decision Process (AMDP), given  access to a generative model (simulator). 
When the ground-truth MDP is weakly communicating, we prove an upper bound of $\widetilde O(H \varepsilon^{-3} \ln \frac{1}{\delta})$ samples per state-action pair, where $H := sp(h^*)$ is the span of bias of any optimal policy, $\varepsilon$ is the accuracy and $\delta$ is the failure probability.
This bound improves the best-known mixing-time-based approaches in \cite{jin2021towards}, which assume the mixing-time of every deterministic policy is bounded.
The core of our analysis is a proper reduction bound from AMDP problems to discounted MDP (DMDP) problems, which may be of  independent interests since it allows the application of DMDP algorithms for AMDP in other settings. We complement our upper bound by proving a minimax lower bound of $\Omega(|\mathcal S| |\mathcal A| H \varepsilon^{-2} \ln \frac{1}{\delta})$ total samples, showing that a linear dependent on $H$ is necessary and that our upper bound matches the lower bound in all parameters of $(|\mathcal S|, |\mathcal A|, H, \ln \frac{1}{\delta})$ up to some logarithmic factors.
\end{abstract}

\section{Introduction}
\label{sec:intro}
%\answerTODO{}
In this paper we study the fundamental problem of learning an optimal policy in a
Markov decision processes (MDP), which is the most fundamental model for reinforcement learning (RL).
In this model, there is a finite set of states, $\cS$, for each of which, there is a set of actions $\cA$. An agent is able to interact with the MDP by playing actions and then the environment will transition to a new state based on some fixed but unknown probability and at the same time return a scalar reward to the agent.
For different variants of the MDP, the agent aims to maximize different objectives. For instance, in the discounted setting (DMDP), the agent aims to maximize the expected cumulative rewards, $\mathbb{E}\left[\sum_{t=0}^\infty \gamma^t r_t\right]$, where $r_t$ is the reward collected at time $t$ and $\gamma \in [0,1)$ is a discount factor, which avoids divergence in the infinite sum. Another important variant is the average reward MDP (AMDP), where the objective is to maximize the average cumulative rewards: $\underset{T\to \infty}{\lim}\mathbb{E}\left[\frac{1}{T}\sum_{t=0}^{T-1}  r_t\right]$. As $\gamma <1$, the DMDP value is roughly dominated by its first $(1-\gamma)^{-1}$ terms in the sum, whereas the AMDP is considering an infinite horizon. Both variants are of great importance in decision making and machine learning research (\citet{bertsekas1995neuro}, \citet{puterman2014markov}, \citet{sutton2018reinforcement}). 

In the learning setting, the transition probability is unknown and hence an agent needs to interact with the MDP to collect data and solve the MDP. In order to characterize the learning sample complexity, we study the problem with the fundamental generative model \citet{kearns1998finite}, in which the agent is able to query each state-action pair for arbitrary number of samples (of the next state and the reward). Then the goal is to output a good stationary and deterministic policy $\pi:\cS\rightarrow \cA$\footnote{As shown in \citet{puterman2014markov}, the optimal policy can always be stationary and deterministic.} such that the value $\rho^{\pi}:=\underset{T\to \infty}{\lim}\mathbb{E}\left[\frac{1}{T}\sum_{t=0}^{T-1}  r_t\right]$ is approximately maximized, where $r_t$ is the reward collected at time $t$ by following the actions of $\pi$. 

Recently, there is a line of works \citet{kearns1998finite}, \citet{kakade2003sample}, \citet{gheshlaghi2013minimax}, \citet{jin2018q}, \citet{sidford2018near}, \citet{sidford2018variance}, \citet{wainwright2019variance}, \citet{tu2019gap}, \citet{agarwal2020model}, \citet{li2020breaking}, settling the learning complexity of DMDP with generative model. 
The state-of-the-art sample upper bound is provided in \citet{li2020breaking}: $\widetilde{O}(|\cS||\cA|(1-\gamma)^{-3}\varepsilon^{-2})$\footnote{$\widetilde{O}(\cdot)$ hides log factors} for an $\varepsilon$-optimal policy. A matching lower bound is provided in \citet{gheshlaghi2013minimax}.
However, for AMDP, 
despite a number of recent advances \citet{wang2017primal}, \citet{jin2020efficiently}, \citet{jin2021towards},
similar characterizations have not been achieved yet. The tightest upper bound so far is $\widetilde{O}(t_{\rm mix}\varepsilon^{-3})$ by \citet{jin2021towards}, where $t_{\rm mix}$ is the \emph{worst-case} mixing time for a policy (i.e., the mixing time of the Markov chain induced by playing a policy). They also showed a minimax lower bound $\Omega(t_{\rm mix}\varepsilon^{-2})$. 
Yet, whether the characterization using the worst-case mixing time is stringent remains illusive. One may simply question: why the worst policy can affect the complexity of learning the best policy?

In the online setting (no generative model), however, \citet{auer2008near} shows that the learning complexity (regret) is depending on a notion called the diameter, which measures the \emph{best} hitting time from one state to another. Such a diameter can be small even if there are very bad policies in the MDP. Recently, the regret bound was additionally improved in \citet{zhang2019regret} by substituting the diameter $D$ in the upper bound into another even smaller parameter $H$, the span\footnote{The span of a vector $x$ is defined to be $sp(x) := \max_i x_i - \min_i x_i$.} of bias of a gain-optimal policy. Nevertheless, as these online algorithms only produce \emph{non-stationary} policies\footnote{Note that standard techniques (e.g.\cite{jin2018q}) converting an algorithm with small regret bound to a small PAC-bound only generates a non-stationary policy for the AMDP setting.}, they cannot be applied in our setting. 
To best of our knowledge, we are not aware of any algorithms having a sample complexity only depending on the diameter $D$ or the bias $H$ of the AMDP. A list of recent results are presented in Table~\ref{table:res-MDP}. 

In this paper, we revisit the learning question in AMDP with a generative model and achieve nearly tight complexity characterizations
% using the \emph{diameter} of an MDP
. In particular, we show that if an AMDP is weakly communicating and an upper bound of $H$, the span of bias of any optimal policy, is known, there is an algorithm that learns an $\varepsilon$-optimal policy with $\widetilde{O}(|\cS||\cA|H\varepsilon^{-3})$ samples. We complement our upper bound by a nearly matching lower bound $\widetilde{\Omega}(|\cS||\cA|H\varepsilon^{-2})$. 
Our result improves the result in \citet{jin2021towards}, where the bound depends on $t_{mix}$, since $t_{mix}$ is always larger than $H$ (see Appendix~\ref{sec:mix}) and can be considerably larger in certain MDPs.
% The most prominent difference between ours and the bounds in \citet{jin2021towards} are the dependence on the \emph{best} or \emph{worst} policy. 
% The bound on the \emph{worst} $t_{\text{mix}}$ can be overly pessimistic. For instance, if an AMDP indeed has a small $t_{\text{mix}}$, we can add  dummy actions to make $t_{\text{mix}}$ very bad for some policies (see Appendix~\ref{sec:mix} for an example). Then the mixing time bounds would be ruined. In contrast, our sample complexity depends on the \emph{best} diameter among all possible policies would be robust against such an attack. We also remark that $t_{\text{mix}}$ and $D$ are different concepts as we can show that they cannot be bounded by each other (see Appendix~\ref{sec:mix}).

\subsection{Related works}
There is long line of research targeting the sample complexity of DMDPs with a generative model. 
%Since the proposal of generative models (\citet{kearns1998finite}; \citet{kakade2003sample}), literature begins to comprehend the sample complexity of DMDPs, both in PAC lower bound of hardness and in upper bound by designing provably correct algorithms.
On the lower bound side,
\citet{gheshlaghi2013minimax} proves a minimax lower bound of $N = \widetilde \Omega ( |\mathcal S| |\mathcal A| (1-\gamma)^{-3} \varepsilon^{-2} )$ to promise to find an $\varepsilon$-optimal policy. On the upper bound side, \citet{gheshlaghi2013minimax} first shows that, for any $\varepsilon \in (0, 1)$, within $N = \widetilde O (|\mathcal S| |\mathcal A| (1-\gamma)^{-3} \varepsilon^{-2})$ samples the model-based approaches can estimate the optimal Q-function to an $\varepsilon$-accuracy, and thus obtains an upper bound of $N = \widetilde O (|\mathcal S| |\mathcal A| (1-\gamma)^{-5} \varepsilon^{-2})$ by using the greedy policy in this estimated Q-function. This work also proves an tight upper bound of $N = \widetilde O (|\mathcal S| |\mathcal A| (1-\gamma)^{-3} \varepsilon^{-2})$ for a small range of accuracy, $\varepsilon \in (0, (1-\gamma)^{-1/2} |\mathcal S|^{-1/2})$. The $\varepsilon$ range is then gradually expanded in a sequence of following works: to $(0, 1)$ in \citet{sidford2018near} using a variance-reduced Q-value iteration and in \citet{wainwright2019variance} using a variance-reduced Q-learning; to $(0, (1-\gamma)^{-1/2})$ in \citet{agarwal2020model} using the definition of a leave-one-out absorbing MDP; and finally to the full range $(0, (1-\gamma)^{-1})$ in \citet{li2020breaking}.

AMDPs are less well-studied in terms of sample complexity. The existing bounds are mostly based on a parameter $t_{\text{mix}}$, which is the maximal mixing time (to $\varepsilon$ accuray) of all stationary and deterministic policies. \citet{wang2017primal} first gives an upper bound of $N = O( \tau^2 |\mathcal S| |\mathcal A| t_{\text{mix}}^2 \varepsilon^{-2} )$, where $\tau$ is an upper bound of the ergodicity of all invariant distribution of all stationary policies, by a primal-dual method to the linear programming version of AMDP problems. The ergodicity assumption is removed in \citet{jin2020efficiently} by using a primal-dual stochastic mirror descent method, and the bound is improved to $N = O( |\mathcal S| |\mathcal A| t_{\text{mix}}^2 \varepsilon^{-2} )$. Recently, \citet{jin2021towards} further improves the upper bound to $N = \widetilde O( |\mathcal S| |\mathcal A| t_{\text{mix}} \varepsilon^{-3} )$ by reducing AMDPs to DMDPs, and also proves a lower bound of $N = \Omega( |\mathcal S| |\mathcal A| t_{\text{mix}} \varepsilon^{-2} )$, showing a tight dependence on $t_{\text{mix}}$.

When it comes to the online setting, where we pursue a low total regret instead of an $\varepsilon$-optimal stationary policy, % some more attractive conclusions have been reached.
\citet{auer2008near} first gives an online algorithm with $T$-step regret upperbounded by $\Delta(T) = \widetilde O(D |\mathcal S| \sqrt{|\mathcal A| T})$ where $D$ is the diameter of the MDP, and also proves a lower bound $\Delta(T) = \Omega (\sqrt{|\mathcal S| |\mathcal A| DT})$. The most recent progress, \citet{zhang2019regret}, improves the upper bound to $\Delta(T) = \widetilde O(\sqrt{|\mathcal S| |\mathcal A| HT})$, where $H := sp(h^*) \leq D$ is the span of optimal policy (the difference of the maximum and minimum value in the bias function $h^*$), matching the lower bound. 
Unfortunately, regret bounds do not directly imply $\varepsilon$-optimal stationary policies in the generative model. Hence these bounds are not comparable with ours.

\subsection{Approach}

Our algorithm and upper bound is based on the well-known fact that an AMDP is the limiting case of a DMDP as $\gamma \rightarrow 1$. Especially, the Laurent series expansion (\citet[Sec 8.2.2]{puterman2014markov}) of $V_\gamma^\pi$ about a new parameter $\beta := (1 - \gamma) \gamma^{-1}$ reveals that $V_\gamma^\pi \sim (1 - \gamma) \rho^\pi$ as $\gamma \rightarrow 1^-$. However, this expansion is restricted only for $\gamma$ sufficiently near $1$ and does not hold for all $\gamma \in (0, 1)$, making it an unsatisfying tool to analyse the reduction. Previous work (\citet{jin2021towards}) instead establishes reduction from AMDPs to DMDPs under the global mixing assumption and relies on the global parameter $t_{mix}$, by showing that $\| \rho^\pi - (1 - \gamma) V_\gamma^\pi \|_\infty = O((1 - \gamma) t_{mix})$ holds for all policies.

To avoid the dependence of upper bound on $t_{mix}$-style parameters, we first prove an intermediate inequality, Lemma \ref{lemma:reduction}, which states that $\| \rho^\pi - (1 - \gamma) V_\gamma^\pi \|_\infty \leq sp((1 - \gamma) V_\gamma^\pi)$ for any policy and any $\gamma$, to bypass the analysis for all policies then. To reduce from AMDPs to DMDPs, we only need to bound the RHS in Lemma \ref{lemma:reduction} for two special policies: $\pi = \pi^*$, the optimal policy in the AMDP; and $\pi = \hat\pi$, a near-optimal policy in the DMDP. Furthermore, because of the near-optimality of $\hat\pi$, $V_\gamma^{\hat\pi}$ and $V_\gamma^*$ have to be close, and so are their spans. Then the remaining is to bound $sp(V_\gamma^\pi)$ just for two optimal policies, $\pi^*$ and $\pi_\gamma^*$, not again requiring a bound for all policies. Some further careful analyses show that $\gamma = 1 - \Theta(\frac{\varepsilon}{H})$ is enough to promise an upper bound of $O(\varepsilon)$ for RHS in Lemma \ref{lemma:reduction}, and then for $\| \rho^* - \rho^{\hat\pi} \|_\infty$.

With the novel reduction bound above, our algorithm takes $\gamma = 1 - \Theta(\frac{\varepsilon}{H})$, calls an near-optimal algorithm for DMDPs to solve the reduced DMDP to an accuracy of $\varepsilon_\gamma = O(\frac{\varepsilon}{1-\gamma})$, and outputs its resulting policy. The sample complexity of this composed algorithm is then
\[ \widetilde O \left( \frac{|\mathcal S| |\mathcal A|}{(1 - \gamma)^3 \varepsilon_\gamma^2} \right) = \widetilde O \left( \frac{|\mathcal S| |\mathcal A|}{(1 - \gamma) \varepsilon^2} \right) = \widetilde O \left( \frac{|\mathcal S| |\mathcal A| H}{\varepsilon^3} \right). \]

As for the lower bound, we modify the construction in \citet{auer2008near} and use a similar analysis as in \citet{feng2019does}. In our construction of hard MDPs, $M_1$ and $M_{k,l} (\forall k \forall l)$, $M_1$ disagrees with $M_{k,l}$ on the $\varepsilon$-optimal action choice at a state $x_k$. However, the only difference between $M_1$ and $M_{k,l}$ is the transition probability at the state-action pair $(x_k, a_l)$, and an information theoretical analysis shows that $\Omega(D \varepsilon^{-2} \ln\frac{1}{\delta})$ independent samples are necessary at this state-action pair to distinguish from $M_1$ and $M_{k,l}$ with high probability, where $D$ is the diameter of the constructed MDPs. Sum the samples up for all $M_{k, l}$ and the lower bound of $\Omega (|\mathcal S| |\mathcal A| D \varepsilon^{-2} \ln\frac{1}{\delta})$ follows. Since $D$ is always an upper bound of $H$, a lower bound of $H$, $\Omega (|\mathcal S| |\mathcal A| H \varepsilon^{-2} \ln\frac{1}{\delta})$, also follows.

%These bounds are more attractive than the existing bounds for $\varepsilon$-optimal policy learning, since diameter $D$ (or span $H$) is usually a much smaller parameter than $t_{mix}$. 
%However, there is no known bound for AMDPs in the setting of $\varepsilon$-optimal policy learning with the parameter $D$ before this work.

\begin{table*}[t]
\centering
\caption{Results of DMDPs and AMDPs (online setting not listed)\label{table:res-MDP}}
\begin{tabular}{cccc}
\toprule
Type & Method & Sample Complexity & Accuracy \\
\midrule
\multirow{6}{*}{DMDP}
 & Lower Bound & $\widetilde \Omega ( |\mathcal S| |\mathcal A| (1-\gamma)^{-3} \varepsilon^{-2} )$ & N/A \\
 & Empirical Q-value Iteration & $\widetilde O (|\mathcal S| |\mathcal A| (1-\gamma)^{-5} \varepsilon^{-2})$ & $(0, 1)$ \\
 & Empirical Q-value Iteration & $\widetilde O (|\mathcal S| |\mathcal A| (1-\gamma)^{-3} \varepsilon^{-2})$ & $(0, \frac{1}{\sqrt{(1-\gamma)|\mathcal S|}})$ \\
 & Variance-reduced Q-value Iteration & $\widetilde O (|\mathcal S| |\mathcal A| (1-\gamma)^{-3} \varepsilon^{-2})$ & $(0, 1)$ \\
 & Empirical + Model-based & $\widetilde O (|\mathcal S| |\mathcal A| (1-\gamma)^{-3} \varepsilon^{-2})$ & $(0, \frac{1}{\sqrt{1-\gamma}})$ \\
 & Perturbed Empirical + Model-based & $\widetilde O (|\mathcal S| |\mathcal A| (1-\gamma)^{-3} \varepsilon^{-2})$ & $(0, \frac{1}{1-\gamma})$ \\
\midrule
\multirow{6}{*}{AMDP}
 & Lower Bound ($t_{mix}$) & $\Omega( |\mathcal S| |\mathcal A| t_{mix} \varepsilon^{-2} )$ & N/A \\
 & Primal-Dual & $O( \tau^2 |\mathcal S| |\mathcal A| t_{mix}^2 \varepsilon^{-2} )$ & $(0, 1)$ \\
 & Primal-Dual SMD & $O( |\mathcal S| |\mathcal A| t_{mix}^2 \varepsilon^{-2} )$ & $(0, 1)$ \\
 & Reduction to DMDPs ($t_{mix}$) & $\widetilde O( |\mathcal S| |\mathcal A| t_{mix} \varepsilon^{-3} )$ & $(0, 1)$ \\
 & \textbf{Reduction to DMDPs} ($H$, \textbf{this work}) & $\widetilde O( |\mathcal S| |\mathcal A| H \varepsilon^{-3} )$ & $(0, 1)$ \\
 & \textbf{Lower Bound} ($D$, \textbf{this work}) & $\widetilde \Omega( |\mathcal S| |\mathcal A| D \varepsilon^{-2} )$ & N/A \\
\bottomrule
\end{tabular}
\end{table*}

\section{Setting}
\label{sec:setting}

\paragraph{Markov decision process}
A Markov decision process (MDP) is a tuple $M = (\mathcal S, \mathcal A, P, r)$, where $\mathcal S$ is a finite set of states, $\mathcal A$ is a finite set of actions, $P : \mathcal S \times \mathcal A \rightarrow \mathbb R^\mathcal S$ is the transition probability to any state when taking any action at any state, and $r : \mathcal S \times \mathcal A \rightarrow [0, 1]$ is the reward function. A stationary policy $\pi : \mathcal S \rightarrow \mathbb R^\mathcal A$ maps any state to a probabilistic distribution on $\mathcal A$, where the policy takes action $a$ at state $s$ with probability $\pi(a | s)$. A trajectory is the finite or infinite sequence of states, actions and rewards, denoted as $\{ s_0, a_0, r_0, s_1, a_1, r_1, \cdots \}$. A trajectory can be induced by a stationary policy by setting $a_t \sim \pi(\cdot | s_t), r_t = r(s_t, a_t), s_{t+1} \sim P(\cdot | s_t, a_t)$ and $s_0 \sim p_0(\cdot)$ obeying some initial state distribution. A trajectory can also be induced by a history-dependent policy, who determines the choice of action every step based on the history trajectory. By default, we only consider stationary policies, except when specially noted.

\paragraph{Discounted MDP}
A discounted MDP (DMDP) is a MDP together with a parameter $\gamma \in (0, 1)$ called discounted factor and the discounted value function, defined as:
\[ \forall s \in \mathcal S :\ \  V_\gamma^\pi(s) := \E \left[ \sum_{t=0}^\infty \gamma^t r_t | s_0 = s \right], \]
where the expectation is taken under the distribution of trajectories induced by policy $\pi$. It's obvious that $0 \leq V_\gamma^\pi \leq \frac{1}{1-\gamma}$. The optimal value function is defined as
\[ V_\gamma^* := \underset{\pi}{\max} V_\gamma^\pi, \]
where the maximum is taken in the space of all history-dependent policies and for all states simultaneously. It's a classical fact (\citet[Sec 6.2.4]{puterman2014markov}) that there exists an optimal policy $\pi_\gamma^*$ which is stationary.

\paragraph{Finite horizon MDP}
A finite horizon MDP (FMDP) is a MDP only running $T$ steps and with a value function without discounting. Its value function is defined as
\[ \forall s \in \mathcal S :\ \  V_T^\pi(s) := \E \left[ \sum_{t=0}^{T-1} r_t | s_0 = s \right]. \]

\paragraph{Average reward MDP}
An average reward MDP (AMDP) is a MDP running infinite steps but with an average gain function instead of a discounted value function. The average gain of a policy can be defined as
\[ \forall s \in \mathcal S :\ \  \rho^\pi(s) := \underset{T \rightarrow \infty}{\lim} \frac{1}{T} V_T^\pi(s), \]
and the convergence of the limitation can be shown to hold forever. Another value function called bias function is useful in AMDP, defined as
\[ \forall s \in \mathcal S :\ \  h^\pi(s) := \underset{T \rightarrow \infty}{C\text{-}\lim} \left[ V_T^\pi(s) - T \rho^\pi \right], \]
where the limitation is a Cesaro limitation and can be replaced by an ordinary limitation when the MDP is aperiodic. Again, it can be shown that $h$ is always well-defined. Like in DMDP, we can also define optimal policies in AMDP. There always exists a stationary policy $\pi^*$ which is gain-optimal in all history-dependent policies (\citet[Sec 9.1.3]{puterman2014markov}):
\[ \rho^{\pi^*} = \rho^* := \underset{\pi}{\max}\rho^\pi, \]
and its bias function is denoted by $h^{\pi^*} = h^*$. The span of $h^*$ plays an important role in the analysis of AMDP, and it's famous (\citet{bartlett2012regal}) that $H := sp(h^*) \leq D$.

\paragraph{Generative model}
When considering the sample complexity, we assume we only have access to a generative model \citet{kearns1998finite} of the ground-truth MDP $M = (\mathcal S, \mathcal A, P, r)$, which can only provide us with samples $s^\prime \sim P(\cdot | s, a)$ for any $(s, a)$ pair. In this work, we assume for simplicity that the reward function $r$ is known and deterministic, since it only introduces lower order terms if remove this assumption.

\paragraph{DMDP oracles}
Though deeper comprehension is still waited to be established in AMDP, literature has already understood much about DMDP, including both algorithms and theories. \citet{li2020breaking} shows that their algorithm promises to find an $\varepsilon$-optimal (stationary) policy with probability at least $1 - \delta$ in $\widetilde O( (1-\gamma)^{-3} \varepsilon^{-2} \ln \frac{1}{\delta} )$ samples per state-action pair from a generative model of the underground MDP $M$, for any $\varepsilon \in (0, (1-\gamma)^{-1}]$, matching the sample complexity lower bound up to some logarithmic factors. In this work, we will use this algorithm as an oracle to the DMDP problems.

\paragraph{Connection assumption of MDP}
MDPs can be classified into several types, according to that to what extent the states connect with each other. A reasonable requirement is that the MDP is weakly communicating, which means that, except some states which are always transient in the Markov chain induced by any policy, for any two other states $s_1, s_2$, there is a policy to go from $s_1$ to $s_2$. The property of being weakly communicating can guarantee that the optimal gain $\rho^*(s)$ is a constant function. A stronger connection assumption is requiring the MDP to be communicating, which doesn't allow the existence of the transient states. It's equivalent to $D < +\infty$, where the diameter $D$ of a MDP is defined as
\[ D := \underset{s_1 \neq s_2}{\max} \underset{\pi}{\min}\  \E^\pi_{s_1} [\tau_{s_2}], \]
where $\tau_{s_2}$ denotes the hitting time to $s_2$. % For a stronger property, we can require the MDP to be unichain, which means that the Markov Chain induced by any stationary policy is recurrent. When the MDP is unichain, it enjoys the property that $\rho^\pi$ is a constant vector for any stationary policy $\pi$, which can simplify the analysis. In this work, we will assume the unichain property.

\section{Main results}
\label{sec:mainres}

In this section we present our main results.
The core of this work is the reduction bound from AMDPs to DMDPs, which enables us to solve AMDP problems using the well-developed DMDP algorithms (oracles). The key challenge is that, to reduce an AMDP problem to a DMDP problem, we have to show a near-optimal policy $\hat\pi$ in DMDP is also near-optimal in the AMDP. For the quantities involved with an exactly optimal policy, e.g. $\pi^*$, we do have some useful tools to analyse them, such as the famous inequality $sp(h^{\pi^*}) \leq D$. But we know little about $\hat \pi$, except that it enjoys a near-optimality property in the reduced DMDP problem. The usual analyses mostly ignore the particularity of $\hat \pi$, simply regard it as an arbitrary policy and establish bounds for arbitrary policies via uniform convergence, which results in upper bounds in term of the global quantities like $t_{mix}$ (\citet{jin2021towards}). However, $t_{mix}$ is a pessimistic parameter, and it can be significantly larger than $H$ in certain MDPs. Moreover, in some MDPs, such as in periodic MDPs, $t_{mix} = +\infty$ and the bound in \citet{jin2021towards} fails.

In this work, we instead prove reduction bounds in form of span of value functions. Such form of bounds have the advantage that they can be passed down the proof from known ones to unknown ones of $\hat \pi$, as will be seen in the following proofs. We assume $H \geq 1$ without loss of generality.

\begin{theorem} \label{thm:reduction}
(Main Result - Reduction Bound from AMDP to DMDP)
Suppose MDP $M$ is weakly communicating. 
For any $\varepsilon \in (0, 1]$, take $\gamma = 1 - \frac{\varepsilon}{H}$. Then for any $\varepsilon_\gamma \in [0, \frac{1}{1-\gamma}]$ and any $\varepsilon_\gamma$-optimal policy $\pi$ in DMDP $(M, \gamma)$, it holds that $\rho^* - \rho^\pi \leq 8 \varepsilon + 3 (1-\gamma) \varepsilon_\gamma = (8 + 3 \frac{\varepsilon_\gamma}{H}) \varepsilon$.
\end{theorem}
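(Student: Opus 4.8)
The plan is to upper-bound $\rho^*-\rho^\pi$ by a three-term telescoping that reduces everything to Lemma~\ref{lemma:reduction} together with $O(\varepsilon)$-control of the two spans $sp\big((1-\gamma)V_\gamma^{\pi^*}\big)$ and $sp\big((1-\gamma)V_\gamma^{*}\big)$, after which the near-optimality of $\pi$ transfers the second span bound to $\pi$ itself. I normalize the bias so that $\min_s h^*(s)=0$, hence $0\le h^*\le He$ with $e$ the all-ones vector and $sp(h^*)=\|h^*\|_\infty=H$. Because $M$ is weakly communicating, $\rho^*$ is a constant function and the pair $(\rho^*,h^*)$ solves both the average-reward optimality equation $\rho^* e+h^*=\max_a[\,r(\cdot,a)+P(\cdot\,|\,\cdot,a)^\top h^*\,]$ and the Poisson equation $\rho^* e+h^*=r^{\pi^*}+P^{\pi^*}h^*$ for a suitable gain-optimal policy $\pi^*$ (e.g.\ a Blackwell-optimal one); this is the only place the connectivity assumption is used.

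The crux is to anchor the discounted value functions to $h^*$. Put $\tilde V:=\tfrac{1}{1-\gamma}\rho^* e+h^*$ and apply the discounted Bellman optimality operator $\mathcal T_\gamma$; using $\tfrac{\gamma\rho^*}{1-\gamma}+\rho^*=\tfrac{\rho^*}{1-\gamma}$ and the optimality equation one gets
\[
\mathcal T_\gamma\tilde V(s)=\tfrac{\gamma\rho^*}{1-\gamma}+\max_a\big[\,r(s,a)+\gamma\,P(\cdot\,|\,s,a)^\top h^*\,\big],
\]
and since $0\le P(\cdot\,|\,s,a)^\top h^*\le H$ and $\max_a[r(s,a)+P(\cdot\,|\,s,a)^\top h^*]=\rho^*+h^*(s)$, the right-hand side lies in $[\tilde V(s)-(1-\gamma)H,\ \tilde V(s)]$. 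Hence $\|\mathcal T_\gamma\tilde V-\tilde V\|_\infty\le(1-\gamma)H$, and as $\mathcal T_\gamma$ is a $\gamma$-contraction in $\|\cdot\|_\infty$ with fixed point $V_\gamma^*$ we conclude $\|V_\gamma^*-\tilde V\|_\infty\le\tfrac{1}{1-\gamma}\|\mathcal T_\gamma\tilde V-\tilde V\|_\infty\le H$. Writing $(1-\gamma)V_\gamma^*=\rho^* e+(1-\gamma)h^*+\eta$ with $\|\eta\|_\infty\le(1-\gamma)H$ gives $sp\big((1-\gamma)V_\gamma^*\big)\le(1-\gamma)sp(h^*)+2\|\eta\|_\infty\le 3(1-\gamma)H$. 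The identical argument with the policy operator $\mathcal T_\gamma^{\pi^*}$ and the Poisson equation (equivalently, the Neumann expansion $V_\gamma^{\pi^*}=(I-\gamma P^{\pi^*})^{-1}r^{\pi^*}$, valid for every $\gamma\in(0,1)$) yields $sp\big((1-\gamma)V_\gamma^{\pi^*}\big)\le 3(1-\gamma)H$; at $\gamma=1-\varepsilon/H$ both read $\le 3\varepsilon$.

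Next, transfer to $\pi$: $\varepsilon_\gamma$-optimality means $V_\gamma^*-\varepsilon_\gamma e\le V_\gamma^\pi\le V_\gamma^*$, so $(1-\gamma)V_\gamma^\pi$ is within $(1-\gamma)\varepsilon_\gamma$ of $(1-\gamma)V_\gamma^*$ in $\|\cdot\|_\infty$, hence $sp\big((1-\gamma)V_\gamma^\pi\big)\le sp\big((1-\gamma)V_\gamma^*\big)+2(1-\gamma)\varepsilon_\gamma\le 3\varepsilon+2(1-\gamma)\varepsilon_\gamma$ — this is the promised mechanism by which span bounds descend to the a priori uncontrolled policy $\pi$. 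Finally, for every state $s$,
\[
\rho^*-\rho^\pi(s)=\big(\rho^*-(1-\gamma)V_\gamma^{\pi^*}(s)\big)+(1-\gamma)\big(V_\gamma^{\pi^*}(s)-V_\gamma^\pi(s)\big)+\big((1-\gamma)V_\gamma^\pi(s)-\rho^\pi(s)\big):
\]
the first summand is $\le sp\big((1-\gamma)V_\gamma^{\pi^*}\big)$ by Lemma~\ref{lemma:reduction} applied to $\pi^*$ (using $\rho^{\pi^*}\equiv\rho^*$), the last is $\le sp\big((1-\gamma)V_\gamma^\pi\big)$ by Lemma~\ref{lemma:reduction} applied to $\pi$, and the middle is $\le(1-\gamma)\varepsilon_\gamma$ because $V_\gamma^{\pi^*}\le V_\gamma^*\le V_\gamma^\pi+\varepsilon_\gamma e$. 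Substituting the above and $(1-\gamma)H=\varepsilon$ produces a bound of the stated form $O(\varepsilon)+3(1-\gamma)\varepsilon_\gamma$; keeping explicit (slightly conservative) constants in the span estimates gives exactly $8\varepsilon+3(1-\gamma)\varepsilon_\gamma=(8+3\varepsilon_\gamma/H)\varepsilon$.

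I expect the anchoring step to be the main obstacle: one needs $sp\big((1-\gamma)V_\gamma^*\big)=O(\varepsilon)$ at the concrete $\gamma=1-\varepsilon/H$, whereas the Laurent expansion $V_\gamma^*\approx\tfrac{\rho^*}{1-\gamma}+h^*$ is only licensed for $\gamma$ near $1$; replacing that expansion by the Bellman-contraction estimate around the explicit anchor $\tfrac{\rho^*}{1-\gamma}e+h^*$ sidesteps this. The remaining delicacy is the one-sided perturbation $0\le(1-\gamma)P(\cdot\,|\,s,a)^\top h^*\le(1-\gamma)H$ coming from the normalization $0\le h^*\le He$, which is exactly what keeps the error $O((1-\gamma)H)=O(\varepsilon)$ rather than scaling with an unnormalized $\|h^*\|_\infty$. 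A minor secondary point is choosing $h^*$ so that it simultaneously solves the optimality equation and is the bias of $\pi^*$, and recalling that weak communication is what makes $\rho^*$ constant so that Lemma~\ref{lemma:reduction} applied to $\pi^*$ yields the clean first-term bound.
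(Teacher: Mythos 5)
Your proof is correct, and its top-level skeleton coincides with the paper's: decompose $\rho^*-\rho^\pi$ so that everything reduces to Lemma~\ref{lemma:reduction} applied to $\pi^*$ and to $\pi$, bound $sp\big((1-\gamma)V_\gamma^{\pi^*}\big)$ and $sp\big((1-\gamma)V_\gamma^{*}\big)$ by $O(\varepsilon)$, and pass the latter bound to $\pi$ via $sp\big((1-\gamma)V_\gamma^{\pi}\big)\le sp\big((1-\gamma)V_\gamma^{*}\big)+2(1-\gamma)\varepsilon_\gamma$. Where you genuinely diverge is in how the two span bounds (the paper's Lemmas~\ref{lemma:sp-A} and \ref{lemma:sp-D}) are established. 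For $V_\gamma^*$ your anchor-plus-contraction estimate around $\tfrac{\rho^*}{1-\gamma}e+h^*$ is the same computation as the paper's comparison of the two optimality equations (the paper's inequality $\|h^*-h_\gamma^*\|_\infty\le\gamma\|h^*-h_\gamma^*\|_\infty+(1-\gamma)\|h^*\|_\infty$ is exactly your contraction step), though your normalization $\min_s h^*(s)=0$ cleanly gives $\|h^*\|_\infty=H$ where the paper implicitly relies on $\|h^*\|_\infty\le sp(h^*)$. For $V_\gamma^{\pi^*}$, however, the paper takes an entirely different route: it proves $sp(V_T^{\pi^*})\le 2\,sp(h^{\pi^*})$ for all finite horizons (Lemma~\ref{lemma:sp-F}) and then rewrites the discounted value as a weighted limit of finite-horizon values, whereas you rerun the same anchoring argument with the policy evaluation operator $\mathcal T_\gamma^{\pi^*}$ and the Poisson equation. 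Your route is shorter and unifies the two span bounds into one mechanism, at the modest cost of having to be careful that the gain-optimal $\pi^*$ is chosen (e.g.\ Blackwell-optimal) so that the same $h^*$ solves both the optimality equation and the evaluation equation for $\pi^*$ --- a point you correctly flag; the paper's Lemma~\ref{lemma:sp-F} route works for any gain-optimal policy with constant gain and yields the finite-horizon span control as a reusable byproduct. Your constants come out as $6\varepsilon+3(1-\gamma)\varepsilon_\gamma$, which is within the stated $8\varepsilon+3(1-\gamma)\varepsilon_\gamma$.
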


Thus, this reduction bound shows that if the DMDP $(M, \gamma)$ is solved to an accuracy of $\varepsilon_\gamma = O(\frac{\varepsilon}{1-\gamma})$, the resulting policy will be of $O(\varepsilon)$ accuracy in the original AMDP. Note that this reduction bound holds for any oracle to the DMDP problems, including both model-free and model-based methods, and is not restricted to the case of generative models.

When it comes to the sample complexity, we can use as oracle the algorithm defined in \citet{li2020breaking}, which is based on a perturbed empirical MDP construction, to realize a near-optimal sample complexity in DMDPs (see Algorithm \ref{alg}). We state the fact of this DMDP oracle in the following lemma:

\begin{lemma} \label{lemma:DMDP}
(\citet[Theorem 1]{li2020breaking})
There exists some universal constant $c_0 > 0$ such that: for any $\delta > 0$, any $\gamma \in (0, 1)$ and any $0 < \varepsilon_\gamma \leq \frac{1}{1-\gamma}$, the policy $\hat\pi$ found by the algorithm 1 in \citet{li2020breaking} obeys $\| V_\gamma^{\hat\pi} - V_\gamma^* \|_\infty \leq \varepsilon_\gamma$ with probability at least $1 - \delta$, provided that the sample size per state-action pair exceeds
\[ N \geq \frac{c_0 \ln (\frac{|\mathcal S| |\mathcal A|}{(1-\gamma)\varepsilon_\gamma\delta})}{(1-\gamma)^3\varepsilon_\gamma^2}. \]
\end{lemma}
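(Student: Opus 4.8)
The statement reproduces \citet[Theorem 1]{li2020breaking} verbatim, so the plan is to invoke it as a black box; what follows is the proof strategy one would carry out to establish such a guarantee. The approach is model-based: draw $N$ i.i.d.\ next-state samples for each pair $(s,a)$, form the empirical kernel $\hat P(\cdot\,|\,s,a)$ by frequency counts, perturb the (known) reward $r$ by a small independent random vector $\zeta$ of magnitude $\|\zeta\|_\infty = \widetilde\Theta((1-\gamma)\varepsilon_\gamma)$, and let $\hat\pi$ be the exact optimal policy of the perturbed empirical DMDP $\hat M = (\mathcal S, \mathcal A, \hat P, r+\zeta)$, computed by value iteration. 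Writing $\hat V^*$ for the optimal value of $\hat M$, the target $V_\gamma^* - V_\gamma^{\hat\pi} \le \varepsilon_\gamma$ (recall $V_\gamma^{\hat\pi} \le V_\gamma^*$ pointwise) follows once one shows $\|\hat V^* - V_\gamma^*\|_\infty$ is small and then converts this value-estimation error into the sub-optimality of $\hat\pi$ in the true DMDP. First I would set up this conversion via the value-difference identity, which expresses $V_\gamma^* - V_\gamma^{\hat\pi}$ as a discounted-occupancy-weighted aggregate of the one-step model error $(\hat P - P)V$ evaluated at the relevant value functions, so that the whole argument reduces to controlling $(\hat P - P)V$.

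The statistical core is precisely this control. A Hoeffding bound gives only $|(\hat P - P)V|(s,a) = O\big(\tfrac{1}{1-\gamma}\sqrt{1/N}\big)$ (up to log factors, using $\|V\|_\infty \le \tfrac{1}{1-\gamma}$), which after the $(1-\gamma)^{-1}$ effective horizon yields the loose $(1-\gamma)^{-4}$ rate. I would instead use Bernstein's inequality to obtain the variance-sensitive bound $|(\hat P - P)V|(s,a) = O\big(\sqrt{\mathrm{Var}_{P(\cdot|s,a)}(V)/N}\big) + O\big(\|V\|_\infty/N\big)$ up to logarithmic factors, and then apply the law of total variance to show that the aggregated standard deviation of the discounted value along trajectories satisfies $\big\|(I-\gamma P^{\pi})^{-1}\sqrt{\mathrm{Var}(V_\gamma^\pi)}\big\|_\infty = O\big((1-\gamma)^{-3/2}\big)$, strictly better than the trivial $(1-\gamma)^{-2}$. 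Propagating this through the Bellman recursion turns one factor of the horizon into a square root and produces the optimal $(1-\gamma)^{-3}$ scaling of $N$.

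The subtle obstacle is that the value function one must feed to Bernstein is $\hat V^*$ (equivalently $V_\gamma^{\hat\pi}$), which depends on \emph{all} the samples, so $(\hat P - P)\hat V^*$ is not a sum of independent terms and the concentration above does not apply directly. To decouple this I would adopt the leave-one-out construction of \citet{agarwal2020model}: for each $(s,a)$ build an auxiliary absorbing MDP that freezes the dynamics at $(s,a)$, whose optimal value $\hat V^{(s,a)}$ is (i) statistically independent of the samples at $(s,a)$, so Bernstein applies with $\hat V^{(s,a)}$ treated as a fixed vector, and (ii) uniformly close to $\hat V^*$, so the resulting bound transfers. This recovers the $(1-\gamma)^{-3}$ rate but, as in \citet{agarwal2020model}, only over the restricted range $\varepsilon_\gamma \in (0,(1-\gamma)^{-1/2})$, because the Bernstein lower-order term $O(\|V\|_\infty/N)$ and the leave-one-out mismatch both degrade as $\varepsilon_\gamma$ approaches $(1-\gamma)^{-1}$.

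Removing this range restriction is the hardest part and is the content being cited. The role of the random reward perturbation $\zeta$ is to force the empirical optimal policy $\hat\pi$ to be unique and to make its value Lipschitz-stable under infinitesimal changes of the model, which is what lets one bound the leave-one-out discrepancy $\|\hat V^{(s,a)} - \hat V^*\|_\infty$ sharply enough across the \emph{entire} accuracy regime rather than only below $(1-\gamma)^{-1/2}$. Combining this stability argument with a refined accounting of the lower-order terms extends the analysis to all $\varepsilon_\gamma \in (0,(1-\gamma)^{-1}]$; a final union bound over the $|\mathcal S||\mathcal A|$ state-action pairs (absorbed into the logarithmic factor) converts the per-pair high-probability estimates into the uniform guarantee, delivering $\|V_\gamma^{\hat\pi} - V_\gamma^*\|_\infty \le \varepsilon_\gamma$ with probability at least $1-\delta$ under the stated sample size $N \ge c_0\,(1-\gamma)^{-3}\varepsilon_\gamma^{-2}\ln\big(\tfrac{|\mathcal S||\mathcal A|}{(1-\gamma)\varepsilon_\gamma\delta}\big)$.
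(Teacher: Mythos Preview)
Your proposal is correct and matches the paper's treatment: the paper does not prove Lemma~\ref{lemma:DMDP} at all but simply cites it verbatim from \citet[Theorem~1]{li2020breaking} and uses it as a black-box DMDP oracle. Your additional sketch of the perturbed-model-based, Bernstein-plus-leave-one-out argument is an accurate summary of the cited result's proof, but it goes beyond what the present paper supplies or requires.
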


\begin{algorithm}[t]
\caption{Reducing to DMDP and Solving with Algorithm 1 in \citet{li2020breaking}}
\label{alg}
\begin{algorithmic}[1]
	\Require A generative model of $M = (\mathcal S, \mathcal A, P, r)$, its optimal bias span $H$ (or an upper bound), accuracy $\varepsilon$, failure probability $\delta$;
	\Ensure An $\varepsilon$-optimal policy of this AMDP problem;
	\State Define $\gamma = 1 - \frac{\varepsilon}{12H}$ and $\varepsilon_\gamma = \frac{\varepsilon}{12(1-\gamma)}$;
	\State Define perturbation size $\xi = \frac{c_p(1-\gamma)\varepsilon_\gamma}{|\mathcal S|^5 |\mathcal A|^5}$ ($c_p$ is a universal constant in Theorem 1 in \citet{li2020breaking}), and independently randomly perturb the reward by \[r_p(s, a) = r(s, a) + \zeta(s, a),\ \ \ \zeta(s, a) \overset{\text{i.i.d.}}{\sim} \text{Unif}(0, \xi);\]
	\State Take $N \geq \tilde c_0 H \varepsilon^{-3} \ln (\frac{|\mathcal S| |\mathcal A|}{\varepsilon\delta})$ samples at each state-action pair, and build an empirical MDP $\widehat M_p = (\mathcal S, \mathcal A, \widehat P, r_p)$, where $\widehat P$ is the empirical transition probability with these samples;
	\State Solve DMDP $(\widehat M_p, \gamma)$ with the Q-value Iteration algorithm, and denote the optimal policy as $\hat\pi$;
	\State Output $\hat\pi$.
\end{algorithmic}
\end{algorithm}

With Theorem \ref{thm:reduction} and Lemma \ref{lemma:DMDP}, the following upper bound of sample complexity of AMDP immediately follows:

\begin{theorem} \label{thm:upperbound}
(Main Result - Upper Bound)
Suppose MDP $M$ is weakly communicating.
% Let $\widehat{M}$ be an empirical MDP constructed by querying the generative model of $M$ for $N$ samples of each state-action pair. 
There exists some universal constant $\tilde c_0 > 0$ such that: for any $\delta \in (0, 1]$ and any $\varepsilon \in (0, 1]$, take $\gamma = 1 - \frac{\varepsilon}{12 H}$, then the policy $\hat\pi$
% a $\varepsilon_\gamma$-optimal policy of the DMDP $(\hat{M}, \gamma)$ 
found by Algorithm \ref{alg} 
obeys 
$\rho^{\hat\pi} \geq \rho^* - \varepsilon$ with probability at least $1 - \delta$, provided that the sample size per state-action pair exceeds
\[ N \geq \tilde c_0 H \varepsilon^{-3} \ln (\frac{|\mathcal S| |\mathcal A|}{\varepsilon\delta}). \]
\end{theorem}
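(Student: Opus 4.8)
\emph{Proof proposal.} The plan is to obtain Theorem~\ref{thm:upperbound} as a direct composition of the reduction bound (Theorem~\ref{thm:reduction}) with the DMDP oracle guarantee (Lemma~\ref{lemma:DMDP}), instantiated at the parameters hard-coded in Algorithm~\ref{alg}, followed by routine constant bookkeeping. First I would record the parameter choices: Algorithm~\ref{alg} sets $\gamma = 1 - \frac{\varepsilon}{12H}$, so $1-\gamma = \frac{\varepsilon}{12H}$, and $\varepsilon_\gamma = \frac{\varepsilon}{12(1-\gamma)}$, which simplifies to $\varepsilon_\gamma = H$. I would then check the admissibility conditions that both results require: $\gamma\in(0,1)$ and $\varepsilon/12\in(0,1]$ hold because $\varepsilon\in(0,1]$ and $H\geq 1$; and $0<\varepsilon_\gamma\leq\frac{1}{1-\gamma}$ holds because $\varepsilon_\gamma=H\geq 1$ while $\frac{1}{1-\gamma}=\frac{12H}{\varepsilon}\geq 12H\geq H$.

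Next I would invoke Lemma~\ref{lemma:DMDP} on the DMDP $(M,\gamma)$ with this $\gamma$ and $\varepsilon_\gamma$: there is a universal constant $c_0$ such that drawing $N\geq \frac{c_0\ln(|\mathcal S||\mathcal A|/((1-\gamma)\varepsilon_\gamma\delta))}{(1-\gamma)^3\varepsilon_\gamma^2}$ samples per state-action pair guarantees, with probability at least $1-\delta$, that the policy $\hat\pi$ returned by Algorithm~\ref{alg} obeys $\|V_\gamma^{\hat\pi}-V_\gamma^*\|_\infty\leq\varepsilon_\gamma$; since $V_\gamma^*\geq V_\gamma^{\hat\pi}$ pointwise, $\hat\pi$ is an $\varepsilon_\gamma$-optimal policy in the \emph{true} DMDP $(M,\gamma)$ (the internal reward perturbation of line~2 is already accounted for in Lemma~\ref{lemma:DMDP}). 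Conditioning on this good event, I would apply Theorem~\ref{thm:reduction} with its accuracy parameter taken to be $\varepsilon/12$, chosen precisely so that the theorem's prescribed discount $1-\frac{\varepsilon/12}{H}$ equals Algorithm~\ref{alg}'s $\gamma$, and with the same $\varepsilon_\gamma=H$. This gives
\[ \rho^*-\rho^{\hat\pi}\;\leq\; 8\cdot\tfrac{\varepsilon}{12}+3(1-\gamma)\varepsilon_\gamma \;=\; \tfrac{8\varepsilon}{12}+3\cdot\tfrac{\varepsilon}{12H}\cdot H \;=\; \tfrac{11\varepsilon}{12}\;\leq\;\varepsilon, \]
which is the claimed near-optimality $\rho^{\hat\pi}\geq\rho^*-\varepsilon$.

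Finally, I would convert the sample requirement of the second step into the stated form. Substituting $1-\gamma=\frac{\varepsilon}{12H}$ and $\varepsilon_\gamma=H$ gives $(1-\gamma)^3\varepsilon_\gamma^2=\frac{\varepsilon^3}{12^3H^3}\cdot H^2=\frac{\varepsilon^3}{12^3H}$, and the logarithm's argument becomes $\frac{12|\mathcal S||\mathcal A|}{\varepsilon\delta}$; absorbing the factor $12^3$ and the additive $\ln 12$ into a new universal constant $\tilde c_0$ (using $|\mathcal S||\mathcal A|/(\varepsilon\delta)\geq 1$ so the logarithm stays positive) shows that $N\geq\tilde c_0 H\varepsilon^{-3}\ln(|\mathcal S||\mathcal A|/(\varepsilon\delta))$ suffices, matching the budget in line~3 of Algorithm~\ref{alg}. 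The argument has essentially no hard step once Theorem~\ref{thm:reduction} and Lemma~\ref{lemma:DMDP} are available; the only points needing care are the rescaling $\varepsilon\mapsto\varepsilon/12$ that aligns the discount factors and keeps the final error at $\frac{11}{12}\varepsilon<\varepsilon$, the fact that the oracle's guarantee concerns the value in the true MDP despite the internal perturbation, and the constant/logarithm manipulations above.
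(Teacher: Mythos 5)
Your proposal is correct and follows exactly the paper's route: the paper proves Theorem~\ref{thm:upperbound} by the one-line instruction ``plug in $\varepsilon/12$ into Theorem~\ref{thm:reduction} and take $\varepsilon_\gamma = \frac{\varepsilon}{12(1-\gamma)}$,'' and your write-up simply fills in the same substitution, the admissibility checks, the $\frac{11}{12}\varepsilon$ error accounting, and the constant bookkeeping (including the observation $\varepsilon_\gamma = H$), all of which are accurate.
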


% Moreover, this algorithm also realizes a $poly(S, A, D, \varepsilon, \ln \frac{1}{\delta})$ time complexity and inherits the good properties of the DMDP oracle.

Lastly, we show that our upper bound is tight in all the parameters $(|\mathcal S|, |\mathcal A|, H, \ln \frac{1}{\delta})$ up to some logarithmic factors, only suffering an extra $\varepsilon$, by stating the following lower bound:

\begin{theorem} \label{thm:lowerbound}
(Main Result - Lower Bound)
For any sufficiently small $\varepsilon, \delta$, any sufficiently large $|\mathcal S|, |\mathcal A|$, and any $D \geq \max\{c_1 \log_{|\mathcal A|}{|\mathcal S|}, c_2\}$ (where $c_1, c_2 > 0$ is some universal constant), for any algorithm promising to return an $\varepsilon$-optimal policy with probability at least $1 - \delta$ on any communicating AMDP problem, there is an MDP with parameters not larger than these values such that the expected total samples on all state-action pairs, when running this algorithm, is at least $c |\mathcal S| |\mathcal A| D \varepsilon^{-2} \ln \frac{1}{\delta}$ (where $c > 0$ is some universal constant). In short, the sample complexity of AMDP problems is $\Omega(|\mathcal S| |\mathcal A| D \varepsilon^{-2} \ln \frac{1}{\delta})$.
\end{theorem}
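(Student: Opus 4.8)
\textbf{Proof plan for Theorem \ref{thm:lowerbound}.}
The plan is to exhibit a family of communicating AMDPs that are pairwise almost identical yet force different $\varepsilon$-optimal actions, and then to run a change-of-measure argument against a single reference instance, following \citet{auer2008near} for the construction and \citet{feng2019does} for the information-theoretic part.

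First I would build the instances. Fix $m = \Theta(|\mathcal S|)$ ``decision states'' $x_1,\dots,x_m$ together with $O(1)$ special states, including a rewarding state $g$ (reward $1$) and a non-rewarding state $b$ (reward $0$), plus a small routing gadget used to address the $x_k$'s. In the reference instance $M_1$, at every decision state all $|\mathcal A|$ actions behave identically: with ``escape probability'' $p_0 := \Theta(1/D)$ the chain leaves $x_k$, going to $g$ with probability $\tfrac12$ and to $b$ with probability $\tfrac12$, and otherwise it stays near $x_k$; the states $g$ and $b$ each self-loop with probability $1 - \Theta(1/D)$ (so one visit to $g$ accrues $\Theta(D)$ reward) and otherwise route back to the decision states via an $O(\log_{|\mathcal A|}|\mathcal S|)$-step addressing gadget. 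This is exactly where the hypothesis $D \ge c_1 \log_{|\mathcal A|}|\mathcal S|$ is used (the routing fits inside $D$ steps), and $D \ge c_2$ ensures $p_0 \le \tfrac12$. For each $k \in [m]$ and $l \in [|\mathcal A|]$ the instance $M_{k,l}$ is obtained from $M_1$ by changing \emph{only} the transition at the pair $(x_k, a_l)$: upon escaping it now goes to $g$ with probability $\tfrac12 + \Theta(\varepsilon)$ instead of $\tfrac12$. I then verify three structural facts, with constants tuned as needed: (i) $M_1$ and every $M_{k,l}$ are communicating with diameter $\le D$ — the $\Theta(D)$ escape time, the $\Theta(D)$ sojourns at $g,b$, and the $O(\log_{|\mathcal A|}|\mathcal S|)$ routing sum to $O(D)$, and the $\Theta(\varepsilon/D)$ perturbation changes the diameter negligibly; (ii) in $M_{k,l}$ the long-run-average-reward gap between playing $a_l$ at $x_k$ and playing any other action there is $\Theta(\varepsilon)$, because the $\Theta(D)$-long reward sojourn at $g$ amplifies the $\Theta(\varepsilon)$ difference in escape destination into a $\Theta(\varepsilon)$ difference of gain, and since every induced chain is unichain the gain is state-independent — so, after scaling so that this gap is, say, $3\varepsilon$, every $\varepsilon$-optimal policy on $M_{k,l}$ must take $\pi(x_k) = a_l$ with probability $\ge 1-\delta$; (iii) the per-query KL divergence between the next-state laws of $M_1$ and $M_{k,l}$ at $(x_k,a_l)$ is $\tfrac{p_0}{2}\ln\tfrac{1}{1-\Theta(\varepsilon^2)} = \Theta(\varepsilon^2/D)$.

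Next I would run the information-theoretic argument. Fix $k$, and let $q_{k,l} := \Pr_{M_1}[\pi(x_k) = a_l]$ be the probability that the (possibly randomized, adaptive) algorithm returns $a_l$ at $x_k$ when run on $M_1$. Since these events are disjoint, $\sum_l q_{k,l} \le 1$, so at most one index has $q_{k,l} > \tfrac12$; hence there is a set $L_k \subseteq [|\mathcal A|]$ with $|L_k| \ge |\mathcal A|-1$ and $q_{k,l} \le \tfrac12$ for all $l \in L_k$. For $l \in L_k$, since $M_1$ and $M_{k,l}$ differ only at $(x_k,a_l)$, the divergence-decomposition identity gives $\mathrm{KL}(\mathbb P^{\mathrm{alg}}_{M_1} \,\|\, \mathbb P^{\mathrm{alg}}_{M_{k,l}}) = \E_{M_1}[T_{x_k,a_l}]\cdot \mathrm{KL}(P_1(\cdot\mid x_k,a_l)\,\|\,P_{k,l}(\cdot\mid x_k,a_l))$, where $T_{x_k,a_l}$ is the number of queries the algorithm makes at $(x_k,a_l)$. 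Combining this with the data-processing inequality applied to the event $\{\pi(x_k)=a_l\}$, whose probability is $\le \tfrac12$ under $M_1$ and $\ge 1-\delta$ under $M_{k,l}$, yields $\E_{M_1}[T_{x_k,a_l}]\cdot\Theta(\varepsilon^2/D) \ge \mathrm{kl}(\tfrac12\,\|\,1-\delta) = \Omega(\ln\tfrac1\delta)$, i.e.\ $\E_{M_1}[T_{x_k,a_l}] = \Omega(D\varepsilon^{-2}\ln\tfrac1\delta)$. Summing over $k\in[m]$ and $l\in L_k$, and using $|\mathcal A|-1 \ge |\mathcal A|/2$ and $m = \Theta(|\mathcal S|)$, the expected total number of samples of the algorithm on the single instance $M_1$ is $\Omega(|\mathcal S||\mathcal A|D\varepsilon^{-2}\ln\tfrac1\delta)$; since $M_1$ itself has the prescribed parameters, this proves the claim.

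I expect the construction — facts (i)--(iii) — to be the main obstacle: the escape probability, the sojourn lengths, the reward values, and the routing gadget must be chosen so that the diameter is simultaneously $\Theta(D)$ and no larger than $D$, the gain gap is $\Theta(\varepsilon)$ (in particular strictly larger than $\varepsilon$), and the per-query KL is only $\Theta(\varepsilon^2/D)$, all while fitting inside $|\mathcal S|$ states and $|\mathcal A|$ actions. Verifying (ii) in particular requires computing long-run averages of the Markov chains induced by arbitrary stationary deterministic policies and checking the gap is uniform, which is where most of the care goes; the change-of-measure step, by contrast, is the now-standard computation and should go through routinely once (i)--(iii) are established.
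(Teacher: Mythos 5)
Your proposal is correct in its overall strategy and matches the paper's: build a family of communicating MDPs that pairwise differ only in the transition kernel at a single pair $(x_k,a_l)$, force the $\varepsilon$-optimal action at $x_k$ to differ between instances, and lower-bound $\E_{M_1}[N_{k,l}]$ by a change of measure before summing over $k,l$. The two genuine differences are in the details. First, the construction: the paper gives each decision state its own private two-state component $(x_k,y_k)$ with reward $1$ at $x_k$, and perturbs the \emph{escape rate} of the distinguished action from $\frac{1+8\varepsilon}{D'}$ down to $\frac{1-8\varepsilon}{D'}$; you instead route all decision states through shared reward/no-reward sinks $g,b$ and perturb the escape \emph{destination} from $\tfrac12/\tfrac12$ to $\tfrac12+\Theta(\varepsilon)$. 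Both yield per-query KL of $\Theta(\varepsilon^2/D)$ and a gain gap of $\Theta(\varepsilon)$, so either works, but yours spends two extra global states and must route every policy back through a shared gadget. Second, the information-theoretic step: the paper follows \citet{feng2019does} with an explicit likelihood-ratio lower bound on the event $A_{k,l}\cap E_k\cap C_{k,l}$ (Markov plus Doob's maximal inequality to control the empirical counts), whereas you use the divergence-decomposition identity plus the data-processing inequality $\mathrm{KL}(\mathbb P_1\|\mathbb P_{k,l})\ge \mathrm{kl}(\mathbb P_1(E)\|\mathbb P_{k,l}(E))$. These give the same $\Omega(D\varepsilon^{-2}\ln\frac1\delta)$ bound; your route is shorter and avoids the explicit martingale bookkeeping (you should still truncate $N_{k,l}$ or otherwise justify the decomposition when $\E[N_{k,l}]$ is not a priori finite).

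One point deserves emphasis, because it is where such constructions most often break. Since the output is a \emph{stationary} policy evaluated by its gain at every state, a policy can in principle be $\varepsilon$-optimal at $x_k$ while playing a ``wrong'' action there, simply by escaping to some other recurrent class whose gain already equals $\rho^*$; the action at a transient state does not affect the gain from that state. This is exactly why your requirement (ii) must be checked for the \emph{perturbed} instances: in $M_{k,l}$ the pair $(x_k,a_l)$ is the unique source of the $\Theta(\varepsilon)$ gain bonus, so any recurrent class not using it has gain at most the unperturbed value, and hence the gain from $x_k$ is $\ge\rho^*-\varepsilon$ only if $\pi(x_k)=a_l$ — your claim (ii) survives. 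Crucially, your pigeonhole step (at most one $l$ with $q_{k,l}>\tfrac12$ under $M_1$) means you never need the action at $x_k$ to be forced in the \emph{reference} instance $M_1$, where all actions outside the perturbation are interchangeable and no such forcing is available. The paper's proof instead asserts $\mathbb P_1(E_k)\ge 1-\delta$, i.e., that $a_1$ is the unique $\varepsilon$-optimal action at $x_k$ in $M_1$, which relies on the tree-escape action at $x_k$ being sub-$\varepsilon$-optimal there; your version is the more robust way to close this step. So: no gap in your plan, provided you carry out the constant-tuning in (i)--(iii) as you describe.
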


Since $H \leq D$ holds in any MDP, Corollary \ref{thm:lowerbound-H} follows:

\begin{corollary} \label{thm:lowerbound-H}
(Main Result - Lower Bound)
The sample complexity of AMDP problems is $\Omega(|\mathcal S| |\mathcal A| H \varepsilon^{-2} \ln \frac{1}{\delta})$.
\end{corollary}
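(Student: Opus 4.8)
The plan is to deduce Corollary~\ref{thm:lowerbound-H} directly from Theorem~\ref{thm:lowerbound} by instantiating the free diameter parameter of that theorem at the value $H$ and invoking the elementary inequality $sp(h^*) \le D$ recalled in Section~\ref{sec:setting}.

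First I would restate what Theorem~\ref{thm:lowerbound} provides: for every admissible diameter value $D$ (i.e.\ $D \ge \max\{c_1 \log_{|\mathcal A|}|\mathcal S|,\, c_2\}$) and all sufficiently small $\varepsilon,\delta$ and sufficiently large $|\mathcal S|,|\mathcal A|$, there is a communicating AMDP $M$ whose number of states, number of actions, and diameter are each at most the prescribed values, and on which any algorithm returning an $\varepsilon$-optimal policy with probability at least $1-\delta$ must, in expectation, draw at least $c\,|\mathcal S||\mathcal A| D \varepsilon^{-2}\ln\frac{1}{\delta}$ total samples. The substantive content — the construction of $M_1$ and the $M_{k,l}$, the observation that they differ only in the transition law at a single pair $(x_k,a_l)$, and the information-theoretic argument showing $\Omega(D\varepsilon^{-2}\ln\frac1\delta)$ samples are needed at that pair to tell them apart — is already carried out there and will be used as a black box.

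Next I would set $D := H$ in this statement. This is legitimate in the natural parameter regime $H \ge \max\{c_1 \log_{|\mathcal A|}|\mathcal S|,\, c_2\}$ (harmless for an asymptotic $\Omega$-claim; recall also $H\ge 1$ by the standing assumption). Theorem~\ref{thm:lowerbound} then yields a communicating AMDP $M$ of diameter at most $H$. By the classical bound $sp(h^*_M) \le D_M \le H$ (see \citet{bartlett2012regal}, quoted in Section~\ref{sec:setting}), the optimal bias span of $M$ is itself at most $H$, so $M$ is a bona fide instance in the family parametrized by optimal bias span $H$. Since any correct algorithm needs at least $c\,|\mathcal S||\mathcal A| D\varepsilon^{-2}\ln\frac1\delta = c\,|\mathcal S||\mathcal A| H\varepsilon^{-2}\ln\frac1\delta$ samples on this same instance, the asserted lower bound $\Omega(|\mathcal S||\mathcal A| H\varepsilon^{-2}\ln\frac1\delta)$ follows immediately.

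I do not expect any real obstacle: the argument is a one-line substitution plus the trivial-direction inequality $H\le D$, and all the delicate work lives inside Theorem~\ref{thm:lowerbound}. The only point needing a word of care is the parameter range — one should note that $H$ must be large enough to serve as a valid diameter in Theorem~\ref{thm:lowerbound}, which is the standard convention for such lower bounds. (If one wanted the hard instance to have bias span exactly $\Theta(H)$ rather than merely $\le H$, one could verify by a direct computation of $h^*$ on the JAO-type chain in the construction that $sp(h^*) = \Theta(D)$ there, but this is not needed for the statement as phrased.)
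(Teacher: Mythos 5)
Your proposal is correct and is essentially the paper's own argument: the paper deduces Corollary~\ref{thm:lowerbound-H} from Theorem~\ref{thm:lowerbound} in one line via $H \le D$, which is exactly your substitution $D := H$ combined with the observation that the resulting hard instance has bias span at most $H$. Your added care about the admissible parameter range for $H$ is a reasonable (and harmless) elaboration, not a deviation.
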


Actually, the construction in \citet{auer2008near} does work in essence. However, the lower bound there is in form of regret for online algorithms, while our upper bound is of minimax form. In this work, we provide a similar information theoretical analysis as in \citet{feng2019does} when considering $\varepsilon$-optimal stationary policies, showing that our upper bound is tight in all parameters but $\varepsilon$.

\section{Analysis}
\label{sec:analysis}

In this section, we will give a high-level outline of the proof, as well as the statement and explanation of some lemmas, but deferring the detailed proofs to the appendix.
% In this section, we will give complete proofs to the upper bound and involved lemmas. As for the lower bound, we will describe the construction of the hard MDPs as well as the high-level idea of the proof, but deferring the detailed proofs to the appendix.

Before stepping into the analysis, we write down some useful notation for the sake of convenience.

\paragraph{Additional Notation}
Given any stationary policy $\pi$, it induces a Markov chain (MC) with transition matrix $P_\pi(s, s^\prime) := \E [ P(s^\prime | s, a) | a \sim \pi(s) ]$. We also denote this MC as $P_\pi$.
The MC can always be uniquely partitioned into several recurrent blocks, $C_k, k = 1, 2, \cdots K_\pi$. Then $P_\pi$ is a block matrix with the same state (index) partition, and so is the limiting matrix $P_\pi^*$. For any function of states, e.g. $\rho^\pi$ and $V_\gamma^\pi$, the restriction of the function on block $C_k$ is denoted with an extra subscript $k$, such as $\rho_k^\pi$ and $V_{\gamma,k}^\pi$.
It's known that, when the MDP is weakly communicating, $\rho^\pi$ is always a constant vector (\citet[Sec 8.3]{puterman2014markov}). It's also known that $\rho_k^\pi$ is always constant for any recurrent block $C_k$ of $P_\pi$.

\subsection{Upper bound}
\label{sec:upperbound}

Lemma \ref{lemma:reduction} is a lemma establishing the connection between DMDP and AMDP value functions. The form of its bound, i.e. span of value functions, is crucial in our analysis.

\begin{lemma} \label{lemma:reduction}
Suppose MDP $M$ is weakly communicating. For any $\gamma \in (0, 1)$ and any stationary policy $\pi$, 
\[ \|\rho^\pi - (1-\gamma)V_\gamma^\pi\|_\infty \leq sp((1-\gamma)V_\gamma^\pi). \]
\end{lemma}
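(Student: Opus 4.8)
The plan is to work blockwise on the recurrent decomposition of the Markov chain $P_\pi$ and then glue the pieces together. First I would recall the standard facts that I intend to lean on: for a fixed stationary policy $\pi$ the discounted value satisfies the Bellman equation $V_\gamma^\pi = r_\pi + \gamma P_\pi V_\gamma^\pi$, hence $(1-\gamma)V_\gamma^\pi = r_\pi - (I - \gamma P_\pi)^{-1}(I-\gamma P_\pi)\big((1-\gamma)V_\gamma^\pi\big)$ rearranges to the more useful identity $(1-\gamma)V_\gamma^\pi = (1-\gamma)(I-\gamma P_\pi)^{-1} r_\pi$. Also, $\rho^\pi = P_\pi^* r_\pi$ where $P_\pi^*$ is the Cesàro-limit (stationary) matrix, and under the weakly communicating assumption $\rho^\pi$ is a constant vector (and constant on each recurrent block). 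The key analytic input is the classical limit $\lim_{\gamma\to 1^-}(1-\gamma)(I-\gamma P_\pi)^{-1} = P_\pi^*$, together with monotonicity-type control; but rather than passing to the limit I would get a clean bound for every fixed $\gamma$.

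The heart of the argument is the following observation restricted to a single recurrent block $C_k$: on $C_k$ the sub-chain is irreducible with a unique stationary distribution $\mu_k$, and $\rho_k^\pi = \mu_k^\top r_{\pi,k}$ is a constant scalar $\rho_k$. Now $(1-\gamma)V_{\gamma,k}^\pi$ is a convex combination (with weights summing to $1$, since $(1-\gamma)\sum_{t\ge 0}\gamma^t = 1$) of the future reward stream, and I want to show that $\rho_k$ lies inside the interval $[\min_s (1-\gamma)V_{\gamma,k}^\pi(s),\ \max_s (1-\gamma)V_{\gamma,k}^\pi(s)]$. Once that is established, for any state $s\in C_k$ we get $|\rho_k - (1-\gamma)V_{\gamma,k}^\pi(s)| \le sp\big((1-\gamma)V_{\gamma,k}^\pi\big) \le sp\big((1-\gamma)V_\gamma^\pi\big)$. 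To see the containment, I would use the fixed-point/averaging structure: writing $w = (1-\gamma)V_{\gamma,k}^\pi$, the Bellman equation gives $w = (1-\gamma)r_{\pi,k} + \gamma P_{\pi,k} w$, so $w$ is a proper weighted average of $(1-\gamma)r_{\pi,k}$-shifted copies of itself; applying the stationary distribution $\mu_k$ from the left and using $\mu_k^\top P_{\pi,k} = \mu_k^\top$ yields $\mu_k^\top w = \mu_k^\top\big((1-\gamma)r_{\pi,k}\big) + \gamma \mu_k^\top w$, i.e. $\mu_k^\top w = \mu_k^\top r_{\pi,k} = \rho_k$. Since $\rho_k$ is a $\mu_k$-weighted average of the entries of $w$, it lies between $\min w$ and $\max w$, which is exactly what I need. (This is the cleanest route: it avoids the $\gamma\to 1$ expansion that the introduction flags as problematic, and it works for all $\gamma\in(0,1)$.)

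It remains to handle states that are transient under $P_\pi$ and to pass from the blockwise bound to the global $\ell_\infty$ bound. For a transient state $s$, $\rho^\pi(s)$ is a convex combination $\sum_k q_k(s)\rho_k$ of the block gains $\rho_k$ (the absorption probabilities into the recurrent blocks), and by the weakly-communicating hypothesis $\rho^\pi$ is actually the constant $\rho^*$, so each $\rho_k$ equals this common value and the transient case reduces to the same bound; alternatively one argues directly that $(1-\gamma)V_\gamma^\pi(s)$ is sandwiched appropriately by following the trajectory until it hits a recurrent block. Finally, taking the max over all states $s$ of $|\rho^\pi(s) - (1-\gamma)V_\gamma^\pi(s)|$ and using that each is controlled by $sp((1-\gamma)V_{\gamma}^\pi)$ gives the claimed inequality. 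I expect the main obstacle to be bookkeeping around transient states and making sure the weakly-communicating assumption is invoked correctly so that the various block gains coincide; the recurrent-block step itself is short once the stationary-distribution averaging identity $\mu_k^\top w = \rho_k$ is in hand.
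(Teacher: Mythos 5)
Your proposal is correct and is in substance the same argument as the paper's: both reduce the lemma to the fact that each entry $\rho^\pi(s)$ is a convex combination of the entries of $(1-\gamma)V_\gamma^\pi$, hence lies in $[\min_s(1-\gamma)V_\gamma^\pi(s),\max_s(1-\gamma)V_\gamma^\pi(s)]$. The paper gets this in one line from the matrix identity $P_\pi^*(I-\gamma P_\pi)=(1-\gamma)P_\pi^*$, so that $\rho^\pi=P_\pi^*\cdot(1-\gamma)V_\gamma^\pi$ with $P_\pi^*$ row-stochastic; your blockwise computation $\mu_k^\mathrm{T}w=\mu_k^\mathrm{T}r_{\pi,k}=\rho_k$ plus absorption probabilities is exactly this identity unpacked. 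One caveat: your step ``by the weakly-communicating hypothesis $\rho^\pi$ is the constant $\rho^*$'' is not valid for an arbitrary policy $\pi$ (weak communication only makes the \emph{optimal} gain constant, and distinct recurrent blocks of $P_\pi$ can have distinct gains), but this is harmless because the convex-combination structure $\rho^\pi(s)=\sum_k q_k(s)\rho_k$ with each $\rho_k$ already in the interval above gives the transient case without any constancy assumption.
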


\begin{proof}[Proof Idea]
This lemma follows directly from the closed-form expressions of $\rho^\pi$ and $V_\gamma^\pi$, in essence saying that, for any $s \in C_k$, $\rho^\pi(s)$ is an average of $V_{\gamma,k}^\pi$. See Appendix \ref{sec:appendix-upper}.
\end{proof}

Remaining is to bound the spans in Lemma \ref{lemma:reduction} for $\pi = \pi^*$ and $\pi_\gamma^*$, and the bound for $\pi =\hat\pi$ can be then passed from $\pi_\gamma^*$ through Lemma \ref{lemma:DMDP}. We state these two bounds in the following two lemmas, Lemma \ref{lemma:sp-A} and Lemma \ref{lemma:sp-D}.

\begin{lemma} \label{lemma:sp-A}
Suppose MDP $M$ is weakly communicating. For any $\varepsilon \in (0, 1]$ and $\gamma := 1 - \frac{\varepsilon}{H}$, it holds that $sp((1-\gamma)V_\gamma^*) \leq 4\varepsilon$.
\end{lemma}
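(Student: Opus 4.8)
The plan is to control $sp((1-\gamma)V_\gamma^*)$ by comparing the discounted optimal value function to the average-reward optimal quantities $\rho^*$ and $h^*$, exploiting that $M$ is weakly communicating so that $\rho^*$ is a constant vector and $H = sp(h^*)$ is finite. The natural tool is the fixed-point characterization: the discounted Bellman optimality equation reads $V_\gamma^* = \max_a\{ r(\cdot,a) + \gamma P(\cdot|\cdot,a) V_\gamma^*\}$, while the average-reward optimality equation (valid in the weakly communicating case) reads $\rho^* + h^* = \max_a\{ r(\cdot,a) + P(\cdot|\cdot,a) h^*\}$. First I would show that $g := \frac{\rho^*}{1-\gamma}\mathbf{1} + h^*$ is an approximate fixed point of the discounted Bellman operator $T_\gamma$, i.e.\ $\|T_\gamma g - g\|_\infty$ is small — specifically of order $(1-\gamma)\,sp(h^*) = (1-\gamma)H$, since the only error comes from replacing $P h^*$ by $\gamma P h^*$, which costs $(1-\gamma)\|Ph^*\|$ shifted by a constant, hence $(1-\gamma)\cdot O(sp(h^*))$ after centering. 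Then by the contraction property of $T_\gamma$ (modulus $\gamma$), the genuine fixed point $V_\gamma^*$ satisfies $\|V_\gamma^* - g\|_\infty \le \frac{1}{1-\gamma}\|T_\gamma g - g\|_\infty = O(H)$.

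From $\|V_\gamma^* - g\|_\infty = O(H)$ and $g = \frac{\rho^*}{1-\gamma}\mathbf{1} + h^*$, I get $sp(V_\gamma^*) \le sp(g) + 2\|V_\gamma^* - g\|_\infty = sp(h^*) + O(H) = O(H)$, because the constant vector $\frac{\rho^*}{1-\gamma}\mathbf 1$ contributes nothing to the span. Multiplying by $(1-\gamma) = \varepsilon/H$ gives $sp((1-\gamma)V_\gamma^*) = O(\varepsilon)$. The remaining work is purely bookkeeping of the absolute constants: I would track the constant in $\|T_\gamma g - g\|_\infty \le (1-\gamma)\,sp(h^*)$ — one should be able to get the clean bound $\|T_\gamma g - g\|_\infty \le (1-\gamma)\, sp(h^*)$ with constant $1$ by centering $h^*$ so that $\min_s h^*(s) = 0$, which forces $0 \le h^* \le H\mathbf 1$ and hence $-(1-\gamma)H \le -(1-\gamma)Ph^* \le 0$ — and then propagate it through the contraction to land at $sp((1-\gamma)V_\gamma^*) \le 4\varepsilon$. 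A small subtlety: since we only have an \emph{upper bound} $H$ on the true span, I should phrase everything in terms of $sp(h^*) \le H$ so the inequality is monotone in $H$.

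The main obstacle I anticipate is making the approximate-fixed-point argument fully rigorous in the weakly communicating (as opposed to communicating) case, where there may be transient states and where the average-reward optimality equation needs the correct formulation; I would appeal to the stated fact that $\rho^*$ is constant under weak communication and to the existence of a bias vector $h^*$ satisfying the optimality equation (Puterman, Sec.~9.1--9.2). A secondary point of care is that $g$ is only an \emph{approximate} fixed point of $T_\gamma$, not of the operator for any fixed policy, so I must use the optimality (max over $a$) Bellman operator and its $\gamma$-contraction in $\|\cdot\|_\infty$ rather than a policy-evaluation operator; this is standard but worth stating explicitly. Once these are in place the constant-chasing is routine and the bound $sp((1-\gamma)V_\gamma^*)\le 4\varepsilon$ should follow with room to spare.
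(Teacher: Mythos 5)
Your proposal is correct and is essentially the paper's own argument in different notation: the paper sets $h_\gamma^* := V_\gamma^* - \frac{1}{1-\gamma}\rho^*$ (so your $g$ satisfies $V_\gamma^* - g = h_\gamma^* - h^*$), derives the same contraction inequality $\|h^*-h_\gamma^*\|_\infty \le \gamma\|h^*-h_\gamma^*\|_\infty + (1-\gamma)\|h^*\|_\infty$, and concludes $sp(V_\gamma^*)\le 4\|h^*\|_\infty \le 4H$. Your additional step of centering $h^*$ so that $\min_s h^*(s)=0$ is a sensible refinement, since it makes $\|h^*\|_\infty = sp(h^*)$ explicit (a point the paper's chain $\|h^*\|_\infty \le sp(h^*)$ glosses over) and yields the slightly sharper constant $3\varepsilon$.
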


\begin{proof}[Proof Idea]
We can rewrite the Bellman optimality equation of a DMDP into a form similar to that of an AMDP, 
\[ (\rho_\gamma^* + h_\gamma^*)(s) = \max_a \left\{ r(s,a) + \gamma P_{s,a}^\mathrm T h_\gamma^* \right\}, \]
by defining the values $\rho_\gamma^*$ and $h_\gamma^*$ properly. Comparing this equation and the equation of corresponding AMDP, we can establish an upper bound of $\| h_\gamma^* - h^* \|_\infty$, and then obtain an upper bound of $sp(V_\gamma^*)$ in $H$. See Appendix \ref{sec:appendix-upper} for details.
\end{proof}

\begin{lemma} \label{lemma:sp-D}
Suppose MDP $M$ is weakly communicating. For any $\varepsilon \in (0, 1]$ and $\gamma := 1 - \frac{\varepsilon}{H}$, it holds that $sp((1-\gamma)V_\gamma^{\pi^*}) \leq 4\varepsilon$.
\end{lemma}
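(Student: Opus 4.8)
The plan is to derive a closed form for $V_\gamma^{\pi^*}$ in terms of the AMDP bias $h^* = h^{\pi^*}$ and then simply read off its span. Write $P_{\pi^*}$ for the transition matrix of the Markov chain induced by $\pi^*$ and $r_{\pi^*}$ for its (deterministic) reward vector, so that $V := V_\gamma^{\pi^*}$ obeys the discounted policy-evaluation equation $V = r_{\pi^*} + \gamma P_{\pi^*} V$. Since $M$ is weakly communicating and $\pi^*$ is gain-optimal, $\rho^{\pi^*} = \rho^* \mathbf{1}$ is a \emph{constant} vector; moreover, using the deviation-matrix characterization $h^{\pi^*} = (I - P_{\pi^*} + P_{\pi^*}^*)^{-1}(I - P_{\pi^*}^*) r_{\pi^*}$ (\citet{puterman2014markov}), the bias satisfies the Poisson equation at \emph{every} state, recurrent or transient:
\[ h^* + \rho^* \mathbf{1} = r_{\pi^*} + P_{\pi^*} h^*. \]

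Eliminating $r_{\pi^*}$ between these two identities and setting $z := V - h^*$ gives $z = \gamma P_{\pi^*} z + \bigl(\rho^* \mathbf{1} - (1-\gamma) P_{\pi^*} h^*\bigr)$. Inverting $I - \gamma P_{\pi^*}$ by its Neumann series and using $P_{\pi^*}\mathbf{1} = \mathbf{1}$, I obtain
\[ V_\gamma^{\pi^*} = \frac{\rho^*}{1-\gamma}\,\mathbf{1} + h^* - (1-\gamma)\sum_{t=0}^\infty \gamma^t P_{\pi^*}^{t+1} h^*. \]
The constant vector contributes nothing to the span, so by subadditivity and positive homogeneity of $sp(\cdot)$,
\[ sp\bigl(V_\gamma^{\pi^*}\bigr) \le sp(h^*) + sp\!\Bigl((1-\gamma)\textstyle\sum_{t\ge 0}\gamma^t P_{\pi^*}^{t+1} h^*\Bigr) \le sp(h^*) + \sum_{t\ge 0}(1-\gamma)\gamma^t\, sp\!\bigl(P_{\pi^*}^{t+1} h^*\bigr). \]
Since $sp(Px) \le sp(x)$ for any row-stochastic $P$ (a standard $\ell_1$-contraction estimate) and $\sum_{t\ge 0}(1-\gamma)\gamma^t = 1$, the right-hand side is at most $sp(h^*) + sp(h^*) = 2H$. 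Multiplying by $1 - \gamma = \varepsilon / H$ yields $sp\bigl((1-\gamma)V_\gamma^{\pi^*}\bigr) \le 2\varepsilon \le 4\varepsilon$.

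The one point that needs care is justifying the Poisson equation when $P_{\pi^*}$ is multichain (several recurrent blocks together with transient states); this is exactly where the deviation-matrix identity is invoked, and weak communicatingness is what lets us replace $\rho^{\pi^*}$ by the scalar $\rho^*$ inside the Neumann series. Everything else — convergence of that series, continuity of $sp$ so that the subadditivity bound passes to the infinite sum, and the elementary estimate $sp(Px)\le sp(x)$ — is routine.
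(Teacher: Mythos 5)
Your proof is correct, and it takes a genuinely different route from the paper's. The paper first proves an auxiliary finite-horizon bound (its Lemma~\ref{lemma:sp-F}, $sp(V_T^{\pi^*}) \leq 2\,sp(h^{\pi^*})$, obtained by manipulating the Cesaro limit defining the bias), then writes $V_\gamma^{\pi^*}$ as an Abel-type weighted combination of the finite-horizon values $V_T^{\pi^*}$ and passes to the limit, arriving at $sp(V_\gamma^{\pi^*}) \leq 4H$. You instead go directly through the multichain Poisson equation $h^* + \rho^*\mathbf{1} = r_{\pi^*} + P_{\pi^*} h^*$ (valid since $\rho^{\pi^*}=\rho^*\mathbf 1$ is constant in a weakly communicating MDP), eliminate $r_{\pi^*}$ against the discounted evaluation equation, and read the span off a closed form for $V_\gamma^{\pi^*}$; the only analytic inputs are the Neumann series, $sp(Px)\le sp(x)$ for stochastic $P$, and countable subadditivity of $sp$. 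Your route is shorter, avoids the finite-horizon detour entirely, and yields the sharper constant $sp((1-\gamma)V_\gamma^{\pi^*}) \le 2\varepsilon$; like the paper's argument, it applies verbatim to any stationary policy whose gain is a constant vector, not just to $\pi^*$. The one step you rightly flag --- the Poisson equation holding at transient as well as recurrent states --- is indeed covered by the deviation-matrix identity $(I-P_{\pi^*})h^{\pi^*} = (I - P_{\pi^*}^*)r_{\pi^*}$, so there is no gap.
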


\begin{proof}[Proof Idea]
The key fact used here is Lemma \ref{lemma:sp-F} in the appendix, which states that the span of FMDP values can always be bound by the span of AMDP values. We then acquire a control for all finite horizon values, no matter how long the horizon is. To analyse $V_\gamma^{\pi^*}$, we can write it into an infinite sum, truncate the infinite sum into a finite sum, cleverly rearrange the summation into a (weighted) sum of several finite horizon values, and finally push the truncation to infinity. See Appendix \ref{sec:appendix-upper} for details.
\end{proof}

% The key fact used to prove this Lemma \ref{lemma:sp-D} is the following Lemma \ref{lemma:sp-F}, which states that the span of FMDP values can always be bound by the span of AMDP values. We then acquire a control for all finite horizon values, no matter how long the horizon is. To analyse $V_\gamma^{\pi^*}$, we can write it into an infinite sum, truncate the infinite sum into a finite sum, cleverly rearrange the summation into a (weighted) sum of several finite horizon values, and finally push the truncation to infinity.

% We will first state and prove Lemma \ref{lemma:sp-F} and then go back to this Lemma \ref{lemma:sp-D}.

With these lemmas at hand, we can now easily prove our Theorem \ref{thm:reduction}:

\begin{proof}[Proof of Theorem \ref{thm:reduction}]
%\begin{proofOffline}

Given any $\varepsilon \in (0, 1]$ and any $\delta \in (0, 1]$, we take $\gamma = 1 - \frac{\varepsilon}{H}$.

Suppose $\hat\pi$ is an $\varepsilon_\gamma$-optimal policy in DMDP $(M, \gamma)$, then
\[ \|(1-\gamma)V_\gamma^* - (1-\gamma)V_\gamma^{\hat\pi}\|_\infty \leq (1-\gamma) \varepsilon_\gamma. \]
By Lemma \ref{lemma:reduction}, 
\[ \|\rho^\pi - (1-\gamma)V_\gamma^\pi\|_\infty \leq sp((1-\gamma)V_\gamma^\pi). \]
By Lemma \ref{lemma:sp-A} and Lemma \ref{lemma:sp-D}, we have
\[ sp((1-\gamma)V_\gamma^{\pi^*}) \leq 4 \varepsilon, \]
and
\[ sp((1-\gamma)V_\gamma^*) \leq 4 \varepsilon. \]
Hence, 
\begin{align} sp((1-\gamma)V_\gamma^{\hat\pi}) &\leq sp((1-\gamma)V_\gamma^*) + 2 \|(1-\gamma)V_\gamma^* - (1-\gamma)V_\gamma^{\hat\pi}\|_\infty \notag\\
&\leq 4 \varepsilon + 2 (1-\gamma) \varepsilon_\gamma. \end{align}
Moreover, note that
\[ V_\gamma^{\pi^*} \leq V_\gamma^* \]
holds naturally because of the optimality of $V_\gamma^*$.

Merging the inequalities above,
\begin{align*}
\rho^* &\leq (1 - \gamma) V_\gamma^{\pi^*} + 4 \varepsilon + 2 (1-\gamma) \varepsilon_\gamma \\
	   &\leq (1 - \gamma) V_\gamma^* + 4 \varepsilon + 2 (1-\gamma) \varepsilon_\gamma \\
	   &\leq (1 - \gamma) V_\gamma^{\hat\pi} + 4 \varepsilon + 3 (1-\gamma) \varepsilon_\gamma \\
	   &\leq \rho^{\hat\pi} + 8 \varepsilon + 3 (1-\gamma) \varepsilon_\gamma,
\end{align*}
and the theorem has been proven. \qedhere

%\end{proofOffline}
\end{proof}

Finally, plug in $\varepsilon$ in Theorem \ref{thm:reduction} with $\frac{\varepsilon}{12}$ and take $\varepsilon_\gamma = \frac{\varepsilon}{12(1-\gamma)}$, and then Theorem \ref{thm:upperbound} follows.

\subsection{Lower bound}
\label{sec:lowerbound}

Next, we prove Theorem \ref{thm:lowerbound} by considering a family of hard MDPs and then using an information theoretical analysis. First, we define the correctness of a RL algorithm.

\begin{definition} \label{def:correct}
($(\varepsilon, \delta)$-correctness) For any $\varepsilon \in (0, 1]$ and $\delta \in (0, 1]$, an AMDP algorithm $\mathscr A$ is called $(\varepsilon, \delta)$-correct if it promises to output an $\varepsilon$-optimal (stationary) policy with probability at least $1 - \delta$ for any communicating AMDP problem.
\end{definition}

To construct the final hard MDPs with parameters $S := |\mathcal S|, A := |\mathcal A|$ and diameter not larger than $D$, we need some components and some notation for brief. In the following, define $A^\prime = A-1$, $D^\prime = \frac{D}{8}$ and $K = \lceil \frac{S}{3} \rceil$. Assume $A \geq 3$, $D \geq \max\{16 \lceil \log_A S \rceil, 16\}$ and $\varepsilon \leq \frac{1}{16}$.

\paragraph{The core component MDP}
We first describe a component MDP with $2$ states $x, y$, $A^\prime$ actions and the parameter $D^\prime$ (not exactly its diameter). For each action $a$, let the probability transitioning from one state to another be $\frac{1 + 8\varepsilon}{D^\prime}$. The reward function is set to be $r(x, a) = 1$ and $r(y, a) = 0$ for all actions. This component MDP is depicted in Figure \ref{fig:component}.

\begin{figure}[!ht]
\centering
\caption{The core component MDP\label{fig:component}}
\begin{tikzpicture}
\fill (-2,0) circle (2pt);
\fill (2,0) circle (2pt);
\draw[->] (-2,0) arc (180:5:2 and 1.5);
\draw[->] (-2,0) arc (180:5:2 and 0.8);
\draw[->] (2,0) arc (0:-175:2 and 1.5);
\draw[->] (2,0) arc (0:-175:2 and 0.8);
\draw[->] (-2,0) arc (0:-340:0.5);
\draw[->] (2,0) arc (180:-160:0.5);
\node at (-1.6,0) {x};
\node at (1.6,0) {y};
\node at (0,1.25) {\vdots};
\node at (0,-1.05) {\vdots};
\node at (0,1.8) {$p=\nicefrac{1}{D}$};
\node at (0,-1.8) {$p=\nicefrac{1}{D}$};
\end{tikzpicture}
\end{figure}

\paragraph{An entire MDP $M_0$}
Then we connect $K$ copies of the component MDP into an entire MDP, $M_0$, on which we will further define the family of hard MDPs. It's easy to show that there always exists an $A^\prime$-ary tree with depth not larger than $\lceil \log_{A^\prime} S \rceil + 1$ having exactly $S-2K$ non-leaf nodes and $K$ leaves\footnote{We can first draw down the $K$ leaves and directly build an $A^\prime$-ary tree with not larger than $S-2K$ non-leaf nodes, and then insert one child for some old leaves.}. We define $M_0$ to be such a tree with every leaf substituted by a copy of the component MDP (the upper node is the $x$ state and the lower node is the $y$ state), as illustrated in Figure \ref{fig:M0}. For every state in this $A^\prime$-ary tree (including the $x$ states but not including the $y$ states), it has different actions deterministically going to its all children respectively, if it's not a leaf, and its parent state, if it's not the root. Any remaining action of these states is defined to be a self-circle action. All the actions defined above is deterministic and with reward $0$. The $K$ component MDPs have the same $A^\prime$ actions (WLOG the first $A^\prime$ of all $A$ actions) and rewards defined in last paragraph. The one remaining action of every $y$ state in each component MDP is also defined as a deterministic and no-reward self-circle. By the construction, it's obvious that $K \geq \frac{S}{3}$ and that $M_0$ has a diameter not larger than $2 (\frac{D^\prime}{1+8\varepsilon} + \log_{A^\prime}S + 1) \leq D$, since $D^\prime := \frac{D}{8}$ and $\log_A S \leq \frac{D}{8}$ by assumption.

\begin{figure}[!ht]
\centering
\caption{An example of $M_0$ when $A=4, S=14$ (actions ignored)\label{fig:M0}}
\begin{tikzpicture}
\coordinate (t1) at (0,0);
\coordinate (t2) at (-2,-1);
\coordinate (t3) at (2,-1);
\coordinate (t4) at (-3,-2);
\coordinate (x1) at (-3,-3);
\coordinate (x2) at (-2,-2);
\coordinate (x3) at (-1,-2);
\coordinate (x4) at (1.5,-2);
\coordinate (x5) at (2.5,-2);
\coordinate (y1) at (-3,-3.5);
\coordinate (y2) at (-2,-2.5);
\coordinate (y3) at (-1,-2.5);
\coordinate (y4) at (1.5,-2.5);
\coordinate (y5) at (2.5,-2.5);
\fill (t1) circle (2pt);
\fill (t2) circle (2pt);
\fill (t3) circle (2pt);
\fill (t4) circle (2pt);
\fill (x1) circle (2pt);
\fill (x2) circle (2pt);
\fill (x3) circle (2pt);
\fill (x4) circle (2pt);
\fill (x5) circle (2pt);
\fill (y1) circle (1pt);
\fill (y2) circle (1pt);
\fill (y3) circle (1pt);
\fill (y4) circle (1pt);
\fill (y5) circle (1pt);
\draw (t1) -- (t2);
\draw (t1) -- (t3);
\draw (t2) -- (t4);
\draw (t4) -- (x1);
\draw (t2) -- (x2);
\draw (t2) -- (x3);
\draw (t3) -- (x4);
\draw (t3) -- (x5);
\draw (x1) -- (y1);
\draw (x2) -- (y2);
\draw (x3) -- (y3);
\draw (x4) -- (y4);
\draw (x5) -- (y5);
\end{tikzpicture}
\end{figure}
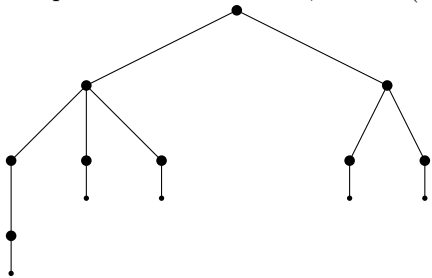

\paragraph{The hard MDPs: $M_1$ and $M_{k,l}$}
Now we can describe the hard MDPs used to prove the lower bound. $M_1$ is defined nearly the same as $M_0$, except the transition probability from every $x_k$ state when taking action $a_1$ is decreased from $\frac{1 + 8\varepsilon}{D^\prime}$ to $\frac{1}{D^\prime}$. A direct calculation shows that the only $\varepsilon$-optimal action at every $x_k$ state is $a_1$. For any $1 \leq k \leq K$ and any $2 \leq l \leq A^\prime$, $M_{k,l}$ is defined by further changing one transition probability from $M_1$. In $M_{k,l}$, only $p(y_k|x_k,a_l)$ is decreased from $\frac{1 + 8\varepsilon}{D^\prime}$ to $\frac{1 - 8\varepsilon}{D^\prime}$, making the only $\varepsilon$-optimal action at state $x_k$ to be $a_l$. It's obvious that these MDPs also have diameter not larger than $D$.

The intuition is that, since $M_1$ and $M_{k,l}$ disagree with the (unique) $\varepsilon$-optimal action at state $x_k$, any $(\varepsilon, \delta)$-correct algorithm must distinguish between $M_1$ and $M_{k, l}$, whose only difference is the transition probability at $(s_k, a_l)$. It can be shown, through an information theoretical analysis, that $\Omega(D\varepsilon^{-2}\ln\frac{1}{\delta})$ samples at this state-action pair is necessary when the algorithm is running in $M_1$. Sum up for all $k, l$ and then Theorem \ref{thm:lowerbound} follows. We follow a general proof structure as  the one in \citet{feng2019does}. The formal proof is presented in \ref{sec:appendix-lower}.

\section{Relationship Between Different Parameter Regimes}
\label{sec:mix}

In this section, we investigate the relationship between the following parameters (which were used in the literature to study the sample complexity of AMDP):
\begin{itemize}
	\item Diameter:
		\[ D := \underset{s_1 \neq s_2}{\max} \min_\pi \mathbb E_{s_1}^\pi [\tau_{s_2}], \]
		where $\tau_{s_2}$ is the hitting time to $s_2$;
	\item Mixing time: 
		\[ t_{mix} := \max_\pi \left[ \underset{t \geq 1}{\arg \min} \left\{ \underset{\mu_0}{\max} \| (\mu_0 P_\pi^t)^\mathrm T - \nu^\mathrm T \|_1 \leq \frac{1}{2} \right\} \right], \]
		where $\mu_0$ is any initial probability distribution on $\mathcal S$, $\nu$ is its invariant distribution, and the maximum of $\pi$ is taken under all deterministic stationary policies;
	\item Span of optimal bias:
		\[ H := sp(h^*), \]
		where $h^* := h^{\pi^*}$ and $\pi^*$ is a gain-optimal policy of the AMDP.
% 	\item Supremum of spans of optimal discounted values:
% 		\[ \overline{H} := \underset{\frac{1}{2} \leq \gamma \leq 1}{\sup} \left\{ \max_k sp(V_{\gamma,k}^*) \right\}, \]
% 		where $V_{\gamma,k}^\pi$ denotes the values in the $k$-th recurrent block of Markov chain $P_\pi$, and $V_1 := h$ for the unity of notation. When the MDP is unichain, which means every MC $P_\pi$ is recurrent, the expression of $\overline{H}$ boils downs to 
% 		\[ \overline{H} = \underset{\frac{1}{2} \leq \gamma \leq 1}{\sup} \left\{ sp(V_\gamma^*) \right\}. \]
\end{itemize}

We will show the following facts about these parameters:
\begin{itemize}
	\item $D$ and $t_{mix}$ cannot bound each other;
	\item $D$ and $t_{mix}$ are both upper bounds of $H$.
% 	\item $D$ and $t_{mix}$ are both upper bounds of $\overline{H}$.
\end{itemize}

% We point out that $\overline{H}$ is a technical parameter defined in this work for the first time. A direct check shows that our result still holds when the diameter in our theorems is substituted by $\overline{H}$, which shows that our improved result Theorem \ref{thm:upperbound-stronger} is stronger than the one in \citet{jin2021towards}.

% It's obvious that $H \leq \overline{H}$ by our definition. 
% % , since $sp(h^*) = \underset{\gamma \to 1^-}{\lim} sp(V_\gamma^*)$ by \citet[Thm 3.3.2]{spieksma2015markov}. 
% However, we conjecture that $H$ is also an upper bound of $\overline{H}$, which, if true, will express the sample complexity of AMDPs with a more familiar notation $H$ instead of the technical $\overline{H}$ here.

\subsection{$D$ is not an upper bound of $t_{mix}$}

In this subsection, we show that $t_{mix}$ can be arbitrary larger than $D$. %, which implies that our upper bound outperforms the one in \citet{jin2021towards} significantly in certain MDPs.

It's obvious that the definition of $t_{mix}$ requires the aperiodicity of the MDP, since in a periodic MDP $t_{mix} = +\infty$ and conclusions in \citet{jin2021towards} fails. This phenomenon can be seen even in the following simple MDP $M = (\mathcal S, \mathcal A, P, r)$: $\mathcal A = \{ a \}$, $P(s_{k+1} | s_k, a) = 1$ for all $k < |\mathcal S|$ and $P(s_1 | s_{|\mathcal S|}, a) = 1$, and $|\mathcal S| = D \geq 2$.

One may argue that the comparison is fair only when the assumptions of both conclusions hold, i.e., $D < +\infty$ and $t_{mix} < +\infty$. But a slight modification from $M$ shows that, even when the MDP is aperiodic and $t_{mix} < +\infty$, $t_{mix}$ can still be arbitrary larger than $D$.

Let $M_\tau = (\mathcal S, \mathcal A, P_\tau, r)$ be a $\tau$-aperiodicity-transformation (\citet[Sec 8.5.4]{puterman2014markov}) of $M$ for any $\tau \in (0, 1)$. That is, $P_\tau(s^\prime | s, a) := (1 - \tau) P(s^\prime | s, a)$ for any $s^\prime \neq s$, and $P_\tau(s | s, a) := (1 - \tau) P(s | s, a) + \tau$. Let $\tau \rightarrow 0^+$ and $\tau \leq \frac{1}{2}$, it's obvious that the diameter of $M_\tau$ remains bounded by $2D$, while $t_{mix}$ goes to $+\infty$ though being finite for any $\tau > 0$.

\subsection{$t_{mix}$ is not an upper bound of $D$}

% However, we also point out that $D$ is not an upper bound of $t_{mix}$. 

Consider a simple MDP $M_0$ with only two states $x, y$ and one action $a$. The transition probabilities are set to $p(y|x,a) = \frac{1}{D}$ and $p(x|y,a) = 1$. A simple calculation shows that $t_{mix}(\varepsilon) = O(\log_D \frac{1}{\varepsilon})$. Hence, for any fixed accuracy $\varepsilon > 0$, $t_{mix}(\varepsilon)$ remains bounded as $D \rightarrow +\infty$.  Especially, $t_{mix} := t_{mix}(\frac{1}{2})$ is bounded as $D \rightarrow +\infty$.

% These examples show $t_{mix}$ and $D$ are indeed different concepts, in sense that they cannot be bounded by each other.

\subsection{$D$ and $t_{mix}$ are both upper bounds of $H$}

It's well known (\citet{bartlett2012regal}) that $H \leq D$.

In this subsection, we show the other bound: $ H = O(t_{mix})$.

\begin{lemma} \label{lemma:sp-mix}
For any MDP with $t_{mix} < +\infty$ and any stationary policy $\pi$, for any recurrent block $C_k$ of MC $P_\pi$ and any positive integer $T$, it holds that $sp(V_{T,k}^\pi) \leq 4 t_{mix}$.
\end{lemma}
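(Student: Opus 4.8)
## Proof Proposal for Lemma \ref{lemma:sp-mix}

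The plan is to express the finite-horizon value $V_{T,k}^\pi$ restricted to a recurrent block $C_k$ in terms of the transition matrix $P_\pi$ acting within that block, and then exploit the mixing-time hypothesis to show that the contributions from distant time steps all collapse to (approximately) the same value regardless of the starting state. Concretely, since $C_k$ is recurrent, the chain started anywhere in $C_k$ stays in $C_k$, so $V_{T,k}^\pi(s) = \sum_{t=0}^{T-1} (P_{\pi,k}^t r_{\pi,k})(s)$ where $P_{\pi,k}$ is the (stochastic) restriction of $P_\pi$ to $C_k$ and $r_{\pi,k}(s) = \E[r(s,a) \mid a\sim\pi(s)]$ is the expected one-step reward, which lies in $[0,1]$. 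The span of a sum is at most the sum of spans, and $sp((P_{\pi,k}^t r_{\pi,k}))$ measures how much the $t$-step expected reward can differ across starting states in $C_k$.

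Next I would invoke the definition of $t_{mix}$: for $t \geq t_{mix}$, any two initial distributions $\mu_0, \mu_0'$ on $C_k$ satisfy $\|\mu_0 P_{\pi,k}^t - \mu_0' P_{\pi,k}^t\|_1 \leq \|\mu_0 P_{\pi,k}^t - \nu\|_1 + \|\nu - \mu_0' P_{\pi,k}^t\|_1 \leq 1$ (using the $\tfrac12$-mixing bound for each term, with $\nu$ the invariant distribution of $P_{\pi,k}$). Hence for $s, s' \in C_k$, taking $\mu_0 = \delta_s$, $\mu_0' = \delta_{s'}$, we get $|(P_{\pi,k}^t r_{\pi,k})(s) - (P_{\pi,k}^t r_{\pi,k})(s')| \leq \|\delta_s P_{\pi,k}^t - \delta_{s'}P_{\pi,k}^t\|_1 \cdot \|r_{\pi,k}\|_\infty \leq 1$ for $t \geq t_{mix}$. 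Actually one can do better: the standard submultiplicativity of the $\tfrac12$-mixing bound gives $\|\delta_s P_{\pi,k}^t - \nu\|_1 \leq 2^{-\lfloor t/t_{mix}\rfloor}$, so $sp(P_{\pi,k}^t r_{\pi,k}) \leq 2 \cdot 2^{-\lfloor t/t_{mix}\rfloor}$. Summing over $t = 0, 1, \dots, T-1$: the first $t_{mix}$ terms each contribute span at most $1$ (since $sp$ of any vector in $[0,1]^{C_k}$ is at most $1$), and the geometric tail $\sum_{t \geq t_{mix}} 2\cdot 2^{-\lfloor t/t_{mix}\rfloor} = 2 t_{mix} \sum_{j\geq 1} 2^{-j} = 2 t_{mix}$. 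This yields $sp(V_{T,k}^\pi) \leq t_{mix} + 2 t_{mix} = 3 t_{mix} \leq 4 t_{mix}$, with room to spare, and the bound is uniform in $T$.

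The main obstacle I anticipate is handling the restriction to the recurrent block cleanly: one must verify that $P_{\pi,k}$ is genuinely a stochastic matrix on $C_k$ (immediate from recurrence) and, more delicately, that the mixing-time definition — which is stated for the full chain $P_\pi$ and its global invariant distribution — transfers to each recurrent block with the same constant $t_{mix}$. This follows because the $\tfrac12$-total-variation mixing of $P_\pi$, when the initial distribution is supported on $C_k$, forces convergence to a distribution supported on $C_k$ (the trajectory never leaves $C_k$), which must be $\nu|_{C_k}$ renormalized, i.e., the block's own invariant distribution; so the block inherits mixing time at most $t_{mix}$. A secondary minor point is the aperiodicity needed to make "$t_{mix} < \infty$" meaningful, which is already part of the hypothesis. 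Everything else is the routine geometric-series estimate sketched above.
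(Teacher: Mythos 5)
Your proof is correct and follows essentially the same route as the paper's: decompose $V_{T,k}^\pi = \sum_{t} P_{\pi,k}^t r_{\pi,k}$, bound each term's span by the total-variation distance to stationarity, and use submultiplicativity of the mixing distance to sum a geometric series over blocks of length $t_{mix}$. The only differences are cosmetic — you bound the first $t_{mix}$ terms by the trivial span bound of $1$ (yielding $3t_{mix}$ instead of the paper's $4t_{mix}$), and you make explicit the restriction of the mixing hypothesis to each recurrent block, a point the paper leaves implicit.
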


\begin{proof}
Define as in \citet{levin2017markov}:
\[ d(t) := \frac{1}{2} \max_s \| (e_s P_\pi^t)^\mathrm T - \nu^\mathrm T \|_1, \]
where $e_s$ denotes the row vector with $1$ at $s$ and $0$ at other entries, and $\nu$ is the invariant distribution.

It's obvious that $d(t)$ is decreasing in $t$. Moreover, \citet[Sec 4.5]{levin2017markov} shows that $d(l t_{mix}) \leq 2^{-l}$ for any $l \in \mathbb N$.

Then we can prove our lemma:
\begin{align*}
sp(V_{T,k}^\pi) &\leq \sum_{t=0}^{T-1} sp(P_{\pi,k}^t r_{\pi,k}) \leq 2 \sum_{t=0}^{T-1} d(t) = 2 \sum_{l=0}^{\frac{T}{t_{mix}} - 1} \sum_{t=0}^{t_{mix} - 1} d(l t_{mix} + t) \leq 2 \sum_{l=0}^{\frac{T}{t_{mix}} - 1} \sum_{t=0}^{t_{mix} - 1} d(l t_{mix})\\& \leq 2 \sum_{l=0}^{\frac{T}{t_{mix}} - 1} \sum_{t=0}^{t_{mix} - 1} 2^{-l} 
				\leq 4 t_{mix}.
\end{align*}
\end{proof}

With Lemma \ref{lemma:sp-mix}, we can easily show $H = O(t_{mix})$ then:

\begin{proposition}
For any MDP with $t_{mix} < +\infty$, it always holds that $H \leq 8 t_{mix}$.
\end{proposition}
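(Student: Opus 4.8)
The plan is to combine Lemma~\ref{lemma:sp-mix} with the relationship between the bias function $h^*$ and finite-horizon values of the gain-optimal policy $\pi^*$. The key observation is that the bias satisfies $h^*(s) = C\text{-}\lim_{T\to\infty} \left[ V_T^{\pi^*}(s) - T\rho^* \right]$, so for each state $s$ the quantity $h^*(s)$ is a Cesaro limit of differences $V_T^{\pi^*}(s) - T\rho^*$. Since subtracting the (state-independent, because $M$ is weakly communicating, hence $\rho^*$ constant) scalar $T\rho^*$ does not change spans, we have $sp(h^*) = C\text{-}\lim_{T\to\infty} sp(V_T^{\pi^*})$ provided the Cesaro limit commutes with the span; more carefully, $sp$ is continuous and $C\text{-}\lim$ of a bounded sequence lies in the closed convex hull of its tail, so $sp(h^*) \le \sup_{T \ge T_0} sp(V_T^{\pi^*})$ for every $T_0$, hence $sp(h^*) \le \limsup_{T\to\infty} sp(V_T^{\pi^*})$.

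The second step is to pass from $sp(V_T^{\pi^*})$ on the whole state space to $sp(V_{T,k}^{\pi^*})$ on the recurrent blocks, where Lemma~\ref{lemma:sp-mix} applies. For this I would use the Markov-chain structure of $P_{\pi^*}$: the transient states feed into the recurrent blocks $C_1, \dots, C_{K_{\pi^*}}$, and the finite-horizon value at a transient state is a convex combination (over the random entry time and entry block) of finite-horizon values inside the recurrent blocks plus a bounded-length prefix collected before absorption. Since rewards lie in $[0,1]$ and (by the $t_{mix}<+\infty$ assumption together with weak communication — one can argue the transient part is also controlled, or simply note that for the span bound it suffices to look at long horizons where the transient prefix contributes a bounded additive term that cancels in the Cesaro/limsup) the contribution of the transient portion is asymptotically negligible in span, we get $\limsup_T sp(V_T^{\pi^*}) \le \max_k \limsup_T sp(V_{T,k}^{\pi^*})$. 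Then Lemma~\ref{lemma:sp-mix} gives $sp(V_{T,k}^{\pi^*}) \le 4 t_{mix}$ for every $T$ and every recurrent block $k$, so $sp(h^*) \le 4 t_{mix}$.

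The factor-of-$8$ (rather than $4$) in the statement presumably absorbs slack from the transient-state reduction and from the Cesaro limit not being an ordinary limit; I would carry a crude additive constant through and check it is dominated by $4 t_{mix}$, or alternatively bound the transient contribution by another $t_{mix}$ directly (the expected absorption time from any transient state into the recurrent core is itself $O(t_{mix})$ under the stated assumptions), yielding the clean bound $H \le 8 t_{mix}$.

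\textbf{Main obstacle.} The delicate point is handling the transient states and the Cesaro limit simultaneously: Lemma~\ref{lemma:sp-mix} only controls $sp(V_{T,k}^\pi)$ on recurrent blocks, whereas $H = sp(h^*)$ is a span over all states including transient ones, and $h^*$ is defined via a Cesaro — not ordinary — limit. Making rigorous the claim that the transient prefix and the periodicity-induced oscillations wash out in the limit (so that the recurrent-block bound transfers) is the step that needs genuine care; everything else is a routine chain of inequalities once that reduction is in place.
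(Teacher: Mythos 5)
There is a genuine gap, and it sits exactly at the step you flagged as "routine": the claimed inequality $\limsup_T sp(V_T^{\pi^*}) \le \max_k \limsup_T sp(V_{T,k}^{\pi^*})$ is false in general. The span of a function over a union of recurrent blocks is \emph{not} bounded by the maximum of its spans over the individual blocks — two blocks can each have tiny internal span while being offset from one another by an arbitrary constant — and Lemma~\ref{lemma:sp-mix} says nothing about these relative offsets, nor about the offset between transient states and the recurrent core. (A symptom of the problem: your chain, if it closed, would yield $H \le 4t_{mix}$, strictly stronger than the claim; the factor of $8$ is not, as you guessed, slack absorbed from the transient reduction.)

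The paper closes this hole with a centering argument you are missing. Your first step is fine and matches the paper: $t_{mix}<+\infty$ forces aperiodicity, so the Cesaro limit is an ordinary limit and $h^{\pi^*} = \lim_T (V_T^{\pi^*} - T\rho^{\pi^*})$, whence Lemma~\ref{lemma:sp-mix} gives $sp(h^{\pi^*}_k) \le 4t_{mix}$ on each recurrent block $C_k$. The extra ingredient is that the bias satisfies $P_{\pi}^* h^{\pi} = 0$, i.e.\ on each recurrent block $h^{\pi^*}_k$ has mean zero under that block's stationary distribution. Hence $h^{\pi^*}_k$ cannot be strictly positive or strictly negative on all of $C_k$, which converts the span bound into a sup-norm bound: $\|h^{\pi^*}_k\|_\infty \le sp(h^{\pi^*}_k) \le 4t_{mix}$. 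All blocks are therefore aligned around zero, and the global span obeys $sp(h^{\pi^*}) \le 2\max_k \|h^{\pi^*}_k\|_\infty \le 8t_{mix}$ — this is where the factor of $2$ actually enters. Without some such normalization pinning down each block's absolute level, no amount of care with the Cesaro limit or the transient prefix will let you pass from per-block spans to the global span $H$.
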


\begin{proof}
Since $t_{mix} < +\infty$, we know the MDP is aperiodic.

Hence, 
\[ h^\pi = \underset{T \to \infty}{\lim} V_T^\pi - T \rho^\pi, \]
which implies with Lemma \ref{lemma:sp-mix} that
\[ sp(h^\pi_k) \leq 4 t_{mix}, \ \ \ \forall k. \]

Since in any recurrent block $C_k$, $h^\pi_k$ cannot be positive or negative on all entries, it holds that
\[ sp(h^\pi) \leq 2 \max_k \|h^\pi_k\|_\infty \leq 2 \max_k sp(h^\pi_k) \leq 8 t_{mix}. \]

Take $\pi = \pi^*$ and the proposition follows.
\end{proof}

\section{Conclusion}

In this work, we establish an $\widetilde O(|\mathcal S| |\mathcal A| H \varepsilon^{-3} \ln \frac{1}{\delta})$ sample complexity upper bound of the $\varepsilon$-optimal policy learning problem in AMDPs, and an $\Omega(|\mathcal S| |\mathcal A| D \varepsilon^{-2} \ln \frac{1}{\delta})$ lower bound (with a corallary of $\Omega(|\mathcal S| |\mathcal A| H \varepsilon^{-2} \ln \frac{1}{\delta})$ lower bound), matching in all parameters of $(|\mathcal S|, |\mathcal A|, H, \ln \frac{1}{\delta})$ up to some logarithmic factors. Our work also opens up a direction for further improvement:
 Our upper bound is one $\varepsilon$ looser than the lower bound. Though we believe that an upper bound of $\widetilde O(|\mathcal S| |\mathcal A| H \varepsilon^{-2} \ln \frac{1}{\delta})$ is attainable, it may be necessary to develop new methods to fix this gap of complexity, instead of directly reducing AMDPs to DMDPs.
	
% 	\item Our upper bound relies on the diameter $D$ of the MDP. However, we suspect an upper bound in terms of $H := sp(h^*)$ is attainable, where $H$ is known to be always not larger than $D$. To be more specific, we suspect our Lemma \ref{lemma:sp-D} also holds for $\gamma = 1 - \frac{\varepsilon}{H}$. Once this is proved, an upper bound of $\widetilde O(|\mathcal S| |\mathcal A| H \varepsilon^{-3} \ln \frac{1}{\delta})$ follows from our original proofs.

\bibliographystyle{plainnat}
\bibliography{citations}

\newpage

\appendix

\onecolumn

\section{Proofs of the lemmas}
\label{sec:appendix-proof}

\subsection{Lemmas for the upper bound}
\label{sec:appendix-upper}

First, we directly prove Lemma \ref{lemma:reduction} and Lemma \ref{lemma:sp-A}.

\begin{proof}[Proof of Lemma \ref{lemma:reduction}]
This lemma follows directly from the closed-form expressions of $\rho^\pi$ and $V_\gamma^\pi$, in essence saying that, for any $s$, $\rho^\pi(s)$ is an average of $V_\gamma^\pi$, as shown below.

We have analytical expressions for $\rho^\pi$ and $V_\gamma^\pi$:
\[ \rho^\pi = P_\pi^* r_\pi, \]
\[ V_\gamma^\pi = (I - \gamma P_\pi)^{-1} r_\pi, \]
where $P_\pi$ is the probability matrix of the induced Markov chain of $\pi$, and $r_\pi$ is the induced reward function, $r_\pi(s) = \mathbb E[r(s,a) | a \sim \pi(s)]$.

Note that
\[ P_\pi^* (I-\gamma P_\pi) = P_\pi^* - \gamma P_\pi^* = (1-\gamma) P_\pi^*, \]
thus
\[ P_\pi^* = (1-\gamma) P_\pi^* (I-\gamma P_\pi)^{-1}. \]
Hence,
\[ \rho^\pi = (1-\gamma) P_\pi^* (I-\gamma P_\pi)^{-1} r_\pi = P_\pi^* \cdot (1-\gamma) V_\gamma^\pi. \]
Since in a weakly communicating MDP, $\rho^\pi(s)$ is always constant about $s$, we have
\[ \min_s (1-\gamma) V_\gamma^\pi(s) \leq \rho^\pi \leq \max_s (1-\gamma) V_\gamma^\pi(s), \]
which implies that
\[ \|\rho^\pi - (1-\gamma)V_\gamma^\pi\|_\infty \leq sp((1-\gamma)V_\gamma^\pi). \]
\end{proof}

\begin{proof}[Proof of Lemma \ref{lemma:sp-A}]

When the MDP $M$ is weakly communicating, it's known that $\rho^*$ is a constant vector and that $\rho^*$ and $h^*$ satisfy the Bellman optimality equation
\[ (\rho^* + h^*)(s) = \max_a \left\{ r(s,a) + P_{s,a}^\mathrm T h^* \right\}. \]

It's also known that, for any $\gamma \in [0,1)$, $V_\gamma^*$ satisfies the discounted Bellman equation
\[ V_\gamma^*(s) = \max_a \left\{ r(s,a) + \gamma P_{s,a}^\mathrm T V_\gamma^* \right\}. \]

To rewrite the discounted Bellman equation into a form similar to the AMDP one, we define
\[ h_\gamma^* = V_\gamma^* - \frac{1}{1-\gamma} \rho^*. \]
Note that $h_\gamma^*$ and $V_\gamma^*$ have the same span since $\rho^*$ is a constant vector.

A direct check shows that $h_\gamma^*$ satisfies the following optimality equation:
\[ (\rho^* + h_\gamma^*)(s) = \max_a \left\{ r(s,a) + \gamma P_{s,a}^\mathrm T h_\gamma^* \right\}. \]

Putting this equation and the AMDP Bellman equation together, we can then analysis a bound for $\|h^* - h_\gamma^*\|_\infty$. For all $s$,
\begin{align*}
|(h^* - h_\gamma^*)(s)| &= |\max_a \left\{ r(s,a) + P_{s,a}^\mathrm T h^* \right\} - \max_a \left\{ r(s,a) + \gamma P_{s,a}^\mathrm T h_\gamma^* \right\}| \\
&\leq |\max_a \left\{ P_{s,a}^\mathrm T h^* - \gamma P_{s,a}^\mathrm T h_\gamma^* \right\}| \\
&\leq |\max_a \left\{ \gamma P_{s,a}^\mathrm T (h^* - h_\gamma^*) \right\}| + |\max_a \left\{ (1-\gamma) P_{s,a}^\mathrm T h^* \right\}| \\
&\leq \gamma \|h^* - h_\gamma^*\|_\infty + (1-\gamma) \|h^*\|_\infty,
\end{align*}
and it follows that
\[ \|h^* - h_\gamma^*\|_\infty \leq \|h^*\|_\infty \]
and thus
\[ sp(V_\gamma^*) = sp(h_\gamma^*) \leq 2 \|h_\gamma^*\|_\infty \leq 4 \|h^*\|_\infty \leq 4 sp(h^*) = 4H. \]

\end{proof}

To prove Lemma \ref{lemma:sp-D}, we need another auxiliary lemma:

\begin{lemma} \label{lemma:sp-F}
For any weakly communicating MDP $M$, any optimal policy $\pi^*$ of its AMDP problem and any positive integer $T$, it holds that $sp(V_T^{\pi^*}) \leq 2 sp(h^{\pi^*})$.
\end{lemma}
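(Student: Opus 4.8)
The plan is to express the finite-horizon value $V_T^{\pi^*}$ in terms of the gain $\rho^*$ and the bias $h^{\pi^*}$, and then bound its span using the known facts about the bias. First I would recall the fundamental identity relating finite-horizon and average-reward quantities: for a stationary policy $\pi$ inducing Markov chain $P_\pi$ with limiting (Cesaro) matrix $P_\pi^*$, one has the decomposition $r_\pi = \rho^\pi + (I - P_\pi) h^\pi$, where $h^\pi$ is the bias satisfying $P_\pi^* h^\pi = 0$ on each recurrent class (in the weakly communicating case $\rho^{\pi^*} = \rho^*$ is a constant vector). Iterating the Bellman-type relation $V_T^\pi = r_\pi + P_\pi V_{T-1}^\pi$ and using $r_\pi = \rho^\pi + h^\pi - P_\pi h^\pi$, a telescoping computation gives the clean closed form
\[ V_T^{\pi^*} = T \rho^* + h^{\pi^*} - (P_{\pi^*})^T h^{\pi^*}. \]

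Once this identity is in hand, the span bound is immediate: $T\rho^*$ is a constant vector and so contributes nothing to the span, and therefore
\[ sp(V_T^{\pi^*}) = sp\!\left( h^{\pi^*} - (P_{\pi^*})^T h^{\pi^*} \right) \leq sp(h^{\pi^*}) + sp\!\left( (P_{\pi^*})^T h^{\pi^*} \right). \]
Since $(P_{\pi^*})^T$ is a stochastic matrix, each entry of $(P_{\pi^*})^T h^{\pi^*}$ is a convex combination of entries of $h^{\pi^*}$, hence lies in $[\min_s h^{\pi^*}(s), \max_s h^{\pi^*}(s)]$, which gives $sp((P_{\pi^*})^T h^{\pi^*}) \leq sp(h^{\pi^*})$. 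Combining the two bounds yields $sp(V_T^{\pi^*}) \leq 2\, sp(h^{\pi^*})$, as claimed.

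The main obstacle is establishing the closed-form identity rigorously when the induced chain is not unichain — in the weakly communicating setting $P_{\pi^*}$ may have several recurrent classes plus transient states, and the bias is defined via a Cesaro limit, so one must be careful that the telescoping argument and the relation $r_{\pi^*} = \rho^* + (I - P_{\pi^*}) h^{\pi^*}$ hold verbatim (this is exactly the content of the evaluation equations for AMDP, e.g. \citet[Sec 8.2, 9.1]{puterman2014markov}). An alternative route that sidesteps writing the exact identity is to work blockwise: restrict to each recurrent block $C_k$ where $\rho_k^{\pi^*}$ is constant and the bias satisfies the standard Poisson equation, prove $sp(V_{T,k}^{\pi^*}) \leq 2\, sp(h_k^{\pi^*}) \leq 2\, sp(h^{\pi^*})$ there, then handle transient states by noting that from a transient state the value is a convex combination of future (eventually recurrent) values plus a bounded-span correction; I would check which of these two presentations is cleanest, but both reduce to the same convexity observation about stochastic matrices.
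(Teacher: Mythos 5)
Your proof is correct and follows essentially the same route as the paper's: both hinge on the identity $V_T^{\pi^*} = T\rho^* + h^{\pi^*} - P_{\pi^*}^T h^{\pi^*}$ followed by the observation that $P_{\pi^*}^T$ is stochastic, hence $sp(P_{\pi^*}^T h^{\pi^*}) \leq sp(h^{\pi^*})$. The only (minor) difference is how the identity is obtained --- you telescope the evaluation (Poisson) equation $r_{\pi^*} = \rho^* + (I - P_{\pi^*})h^{\pi^*}$, which indeed holds verbatim for any stationary policy in a finite MDP including the multichain case (\citet[Sec 8.2.1]{puterman2014markov}), whereas the paper derives the same identity directly from the Cesaro-limit definition of the bias; both derivations are valid.
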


\begin{proof}
First note that in a weakly communicating, $\rho^{\pi^*}$ is a constant vector. Below denote $\pi = \pi^*$. (to note that this lemma actually holds for any $\pi$ with a constant $\rho^\pi$)

Since
\begin{align*}
h^\pi &= \underset{t \rightarrow \infty}{C\text{-}\lim} (V_t^\pi - t \rho^\pi) \\
		&= \underset{N \rightarrow \infty}{\lim} \frac{1}{N} \sum_{t=1}^N (V_t^\pi - t \rho^\pi) \\
		&= \underset{N \rightarrow \infty}{\lim} \frac{1}{N} \sum_{t=T+1}^N (V_t^\pi - t \rho^\pi) \\
		&= \underset{N \rightarrow \infty}{\lim} \frac{1}{N} \sum_{t=T+1}^N [(V_T^\pi - T \rho^\pi) + P_\pi^T (V_{t-T}^\pi - (t-T) \rho^\pi)] \\
		&= \underset{N \rightarrow \infty}{\lim} \frac{1}{N} \sum_{t=T+1}^N (V_T^\pi - T \rho^\pi) + \underset{N \rightarrow \infty}{\lim} \frac{1}{N} \sum_{t=T+1}^N P_\pi^T (V_{t-T}^\pi - (t-T) \rho^\pi) \\
		&= (V_T^\pi - T \rho^\pi) + P_\pi^T \underset{N \rightarrow \infty}{\lim} \frac{1}{N} \sum_{t=1}^{N-T} (V_t^\pi - t \rho^\pi) \\
		&= (V_T^\pi - T \rho^\pi) + P_\pi^T h^\pi,
\end{align*}
it follows that
\[ V_T^\pi = T \rho^\pi + h^\pi - P_\pi^T h^\pi \]
and then
\[ sp(V_T^\pi) \leq 2 sp(h^\pi). \]
\end{proof}

Now we can prove Lemma \ref{lemma:sp-D}:

\begin{proof}[Proof of Lemma \ref{lemma:sp-D}]
Since
\begin{align*}
V_\gamma^{\pi^*} &= \lim_{T \rightarrow \infty} \sum_{t=0}^{T-1} \gamma^t P_{\pi^*}^t r_{\pi^*} \\
				 &= \lim_{T \rightarrow \infty} V_T^{\pi^*} - (1-\gamma) \sum_{t=1}^{T-1} \gamma^{t-1} P_{\pi^*}^t V_{T-t}^{\pi^*},
\end{align*}
we have
\begin{align*}
sp(V_\gamma^{\pi^*}) &\leq \underset{T \rightarrow \infty}{\lim\sup\ } sp(V_T^{\pi^*}) + (1-\gamma) \sum_{t=1}^{T-1} \gamma^{t-1} sp(V_{T-t}^{\pi^*}) \\
					 &\leq 2 sp(h^{\pi^*}) \left( 1 + (1-\gamma) \sum_{t=1}^\infty \gamma^{t-1} \right) \\
					 &= 4 sp(h^{\pi^*}) \\
					 &= 4H,
\end{align*}
and Lemma \ref{lemma:sp-D} follows.
\end{proof}

\subsection{Lemmas for the lower bound}
\label{sec:appendix-lower}

We refer the construction of hard MDPs $M_1$ and $M_{k, l}, 1 \leq k \leq K, 2 \leq l \leq A^\prime$ to Section \ref{sec:lowerbound}.
Remind that these MDPs all have parameters $|\mathcal S| = S, |\mathcal A| = A$ and diameter not larger than the given $D$. Also remind we have defined some notation: $A^\prime = A - 1, D^\prime = \frac{D}{8}$ and the number of the 2-state component in the MDP $K = \lceil \frac{S}{3} \rceil$.
We assume $\varepsilon \leq \frac{1}{32}, \delta \leq \frac{1}{16}, A \geq 3$, and $D \geq \max \{ 16 \lceil \log_A{S} \rceil, 16 \}$.

Before stepping into the analysis, we first introduce some another notation:

Assume algorithm $\mathscr A$ is $(\varepsilon, \delta)$-correct on all these MDPs. Denote by $N_{k, l}$ the number of samples $\mathscr A$ takes on $(x_k, a_l)$, which is a random variable with randomness on both the generative model of the underground MDP and $\mathscr A$ itself (to allow random algorithms). Denote $\mathbb P_1, \mathbb E_1, \mathbb P_{k,l}, \mathbb E_{k,l}$ as the probability and expectation when the underground MDP is $M_1$ and $M_{k,l}$, respectively. We aim to prove a lower bound of \[ LB := C d \epsilon^{-2} \ln\frac{1}{4 \delta} = \Theta(D \varepsilon^{-2} \ln \frac{1}{\delta}) \] for $\mathbb E_1 [N_{k,l}], \forall k \forall l$, where $C > 0$ is a universal constant to be determined later, and $d := D^\prime = \frac{D}{8}$ and $\epsilon := 8 \varepsilon$ just to simplify the notation below.

We define the following events:
\[ A_{k,l} := \left\{ N_{k,l} \leq 4 LB \right\},\ \ \ \ \ E_{k} := \left\{ \mathscr A \text{\ outputs a policy\ } \pi \text{\ with\ } \pi(s_k) = a_1 \right\}, \]
\[ C_{k,l} := \left\{ \underset{1 \leq N_{k,l} \leq 4 LB}{\max} | p_0 \cdot N_{k,l} - S_{k,l}(N_{k,l}) | \leq \sqrt{16 LB \cdot p_0 \cdot (1-p_0) \cdot \ln \frac{1}{4\delta} } \right\}, \]
where $S_{k,l}(N_{k,l})$ is the times the generative model returns $x_k$ as the next state when $\mathscr A$ calls it at $(x_k, a_l)$ for $N_{k,l}$ times, and $p_0 = 1 - \frac{1+8\varepsilon}{D^\prime}$.

The following two lemmas are exactly same as the Lemma 7 and Lemma 8 in \citet{feng2019does}. We state these lemmas and give a proof here for completeness.

\begin{lemma} \label{lemma:pr-event-A}
For any $1 \leq k \leq K$ and $2 \leq l \leq A^\prime$, if $\mathbb E_1 [N_{k,l}] \leq LB$, then $\mathbb P_1 (A_{k,l}) \geq \frac{3}{4}$.
\end{lemma}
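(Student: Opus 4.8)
The claim is the standard Markov-inequality step in a lower-bound argument, so the plan is short. First I would recall the definition $LB := C d \epsilon^{-2}\ln\frac{1}{4\delta}$ and the event $A_{k,l} = \{N_{k,l} \leq 4LB\}$. Assuming $\mathbb E_1[N_{k,l}] \leq LB$, I would apply Markov's inequality to the nonnegative random variable $N_{k,l}$ under the measure $\mathbb P_1$: for any threshold $t > 0$,
\[ \mathbb P_1(N_{k,l} > t) \leq \frac{\mathbb E_1[N_{k,l}]}{t}. \]
Taking $t = 4LB$ and using $\mathbb E_1[N_{k,l}] \leq LB$ gives $\mathbb P_1(N_{k,l} > 4LB) \leq \frac{LB}{4LB} = \frac14$.

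Then I would pass to the complement: $\mathbb P_1(A_{k,l}) = \mathbb P_1(N_{k,l} \leq 4LB) = 1 - \mathbb P_1(N_{k,l} > 4LB) \geq 1 - \frac14 = \frac34$, which is exactly the stated bound. That completes the proof.

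The only point to double-check is that $N_{k,l}$ is a genuine nonnegative random variable with a finite first moment under $\mathbb P_1$ (so that Markov applies and $\mathbb E_1[N_{k,l}] \leq LB$ is even meaningful); this is guaranteed by the hypothesis $\mathbb E_1[N_{k,l}] \leq LB < \infty$ together with the fact that sample counts are nonnegative integers. So there is no real obstacle here — the lemma is a one-line application of Markov's inequality, and the work is entirely concentrated in the companion lemmas (the information-theoretic change-of-measure argument bounding $\mathbb P_1$ versus $\mathbb P_{k,l}$, and the eventual summation over all $k,l$).
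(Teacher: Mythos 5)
Your proof is correct and is essentially identical to the paper's: both apply Markov's inequality to the nonnegative random variable $N_{k,l}$ at threshold $4LB$ and pass to the complement to get $\mathbb P_1(A_{k,l}) \geq 1 - \frac{LB}{4LB} = \frac{3}{4}$. Nothing further is needed.
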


\begin{proof}
By Markov inequality, 
\[ \mathbb P_1 (A_{k,l}) = 1 - \mathbb P_1 (\overline {A_{k,l}}) \geq 1 - \frac{\mathbb E_1 [N_{k,l}]}{4 LB} \geq \frac{3}{4}. \]
\end{proof}

\begin{lemma} \label{lemma:pr-event-C}
For any $1 \leq k \leq K$ and $2 \leq l \leq A^\prime$, $\mathbb P_1 (C_{k,l}) \geq \frac{3}{4}$.
\end{lemma}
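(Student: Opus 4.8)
The plan is to establish the concentration statement $\mathbb{P}_1(C_{k,l}) \geq \frac{3}{4}$ by viewing the successive next-state outcomes at the pair $(x_k, a_l)$ under $M_1$ as i.i.d.\ Bernoulli trials and applying a maximal concentration bound (a Kolmogorov-type / Doob maximal inequality for the martingale of centered partial sums, or equivalently a union bound refined by Doob's $L^2$ maximal inequality). First I would fix $k,l$ and note that under $\mathbb{P}_1$, each call to the generative model at $(x_k,a_l)$ returns $x_k$ independently with probability $p_0 = 1 - \frac{1+8\varepsilon}{D'}$, independently of the algorithm's internal randomness and of $N_{k,l}$ (since the algorithm only observes outcomes, it cannot bias future draws). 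So I can couple the process to a fixed infinite i.i.d.\ sequence $Z_1, Z_2, \dots$ of $\mathrm{Bernoulli}(p_0)$ variables, with $S_{k,l}(n) = \sum_{i=1}^n Z_i$, and then $N_{k,l}$ is a stopping time with respect to this sequence (together with the independent algorithm randomness).

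Next I would bound $\mathbb{P}_1(\overline{C_{k,l}}) = \mathbb{P}_1\!\left(\max_{1 \le n \le 4LB} |p_0 n - S_{k,l}(n)| > \sqrt{16 LB \, p_0(1-p_0)\ln\frac{1}{4\delta}}\right)$. The centered process $M_n := p_0 n - S_{k,l}(n) = \sum_{i=1}^n (p_0 - Z_i)$ is a martingale with $\mathrm{Var}(M_n) = n\,p_0(1-p_0) \le 4LB\,p_0(1-p_0)$ for $n \le 4LB$. By Doob's maximal inequality, $\mathbb{P}_1(\max_{n \le 4LB} |M_n| > t) \le \mathbb{E}[M_{4LB}^2]/t^2 = 4LB\,p_0(1-p_0)/t^2$. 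Plugging in $t^2 = 16 LB\, p_0(1-p_0)\ln\frac{1}{4\delta}$ gives $\mathbb{P}_1(\overline{C_{k,l}}) \le \frac{1}{4\ln\frac{1}{4\delta}} \le \frac{1}{4}$ as long as $\ln\frac{1}{4\delta} \ge 1$, which holds under the standing assumption $\delta \le \frac{1}{16}$ (actually $\delta \le \frac{1}{4e}$ suffices, and $\frac{1}{16} < \frac{1}{4e}$ is false—so I would instead use the slightly weaker but still-sufficient bound, e.g.\ note $\ln\frac{1}{4\delta} \ge \ln\frac{1}{4/16} = \ln 4 > 1$). Hence $\mathbb{P}_1(C_{k,l}) \ge \frac{3}{4}$.

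I expect the main subtlety to be the measure-theoretic care in (i) justifying that $N_{k,l}$ is a stopping time and that Doob's inequality applies even though the algorithm interleaves queries across many state-action pairs and may be randomized, and (ii) handling the possibility $N_{k,l} = 0$ (the event $C_{k,l}$ is stated over $1 \le N_{k,l} \le 4LB$, so the empty-maximum case is vacuously inside the event and causes no trouble). The cleanest way to address (i) is to define the filtration generated by the external randomness of $\mathscr{A}$ together with all outcomes returned so far across \emph{all} pairs, observe that the draws at $(x_k,a_l)$ are i.i.d.\ and independent of everything else, and restrict the maximal inequality to the at-most-$4LB$ draws at this single pair; the stopping time $N_{k,l} \wedge 4LB$ only enters through the deterministic horizon cap $4LB$ in the maximum, so no optional-stopping argument is even needed — a plain maximal inequality over the fixed index range $1 \le n \le 4LB$ suffices. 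This is exactly the structure of Lemma~8 in \citet{feng2019does}, so the proof is essentially a transcription with the parameters $d = D'$, $\epsilon = 8\varepsilon$, $p_0 = 1 - \frac{1+8\varepsilon}{D'}$ substituted in.
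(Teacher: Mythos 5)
Your proof is correct and takes essentially the same route as the paper: apply Doob's $L^2$ maximal inequality to the centered partial-sum martingale $p_0 n - S_{k,l}(n)$ over the fixed horizon $4LB$, obtaining $\mathbb P_1(\overline{C_{k,l}}) \leq \frac{1}{4\ln\frac{1}{4\delta}} \leq \frac{1}{4}$ since $\delta \leq \frac{1}{16}$. (Your parenthetical claim that $\frac{1}{16} < \frac{1}{4e}$ is false is itself an arithmetic slip, but the fallback computation $\ln\frac{1}{4\delta} \geq \ln 4 > 1$ is what the argument actually needs and is correct.)
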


\begin{proof}
By definition of $S_{k,l}$ and the MDPs, $S_{k,l}$ is the sum of $N_{k,l}$ i.i.d. Bernoulli$\left( 1 - \frac{1+8\varepsilon}{D^\prime} \right)$ random variables, and $\left( 1 - \frac{1+8\varepsilon}{D^\prime} \right) \cdot N_{k,l} - S_{k,l}(N_{k,l})$ is a martingale. Using Doob's inequality (\citet[Thm 4.4.2]{durrett2019probability})
, we have
\begin{align*}
\mathbb P_1(C_{k,l}) &= 1 - \mathbb P_1 (\overline {C_{k,l}}) \\
					 &\geq 1 - \frac{\mathbb E_1 \left[ \left( \left( 1 - \frac{1+8\varepsilon}{D^\prime} \right) \cdot 4 LB - S_{k,l}(4 LB) \right) ^2 \right]}{16 LB \cdot \frac{1+8\varepsilon}{D^\prime} \cdot \left( 1 - \frac{1+8\varepsilon}{D^\prime} \right) \cdot \ln \frac{1}{4\delta}} \\
					 &= 1 - \frac{1}{4 \ln \frac{1}{4\delta}} \\
					 &\geq \frac{3}{4}
\end{align*}
given that $\delta \leq \frac{1}{16}$.
\end{proof}

Now we can prove the following crucial lemma by a transformation between the probability measures.

\begin{lemma} \label{lemma:lowerbound-core}
For any $1 \leq k \leq K$ and $2 \leq l \leq A^\prime$, if $\mathbb E_1 [N_{k,l}] \leq LB$ and $\mathbb P_1(E_{k}) \geq 1 - \delta$, then $\mathbb P_{k,l} (E_{k}) > \delta$.
\end{lemma}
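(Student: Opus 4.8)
The plan is to prove Lemma~\ref{lemma:lowerbound-core} by a standard change-of-measure (likelihood ratio) argument, localized to the only state-action pair where $M_1$ and $M_{k,l}$ differ, namely $(x_k, a_l)$. Fix $k, l$ and assume $\mathbb E_1[N_{k,l}] \leq LB$ and $\mathbb P_1(E_k) \geq 1 - \delta$. First I would observe that, conditioned on the full sequence of samples drawn at every state-action pair \emph{other than} $(x_k,a_l)$, the algorithm's output and all other randomness have the same distribution under $M_1$ and $M_{k,l}$, since these two MDPs agree everywhere except at $(x_k,a_l)$. Hence the Radon--Nikodym derivative $\frac{d\mathbb P_{k,l}}{d\mathbb P_1}$ depends only on the samples at $(x_k,a_l)$, and on the event $\{N_{k,l} = n\}$ with $S_{k,l}(n)$ of those $n$ samples returning $x_k$, it equals $\left(\frac{1 - \frac{1-8\varepsilon}{D'}}{1 - \frac{1+8\varepsilon}{D'}}\right)^{S_{k,l}(n)} \left(\frac{(1-8\varepsilon)/D'}{(1+8\varepsilon)/D'}\right)^{n - S_{k,l}(n)}$, using that under $M_1$ the self-loop probability at $(x_k,a_l)$ is $p_0 = 1 - \frac{1+8\varepsilon}{D'}$ while under $M_{k,l}$ it is $1 - \frac{1-8\varepsilon}{D'}$.

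Next I would lower-bound this likelihood ratio on the ``good'' event $G := E_k \cap A_{k,l} \cap C_{k,l}$. On $A_{k,l}$ we have $N_{k,l} \leq 4LB$, and on $C_{k,l}$ we have $|p_0 N_{k,l} - S_{k,l}(N_{k,l})| \leq \sqrt{16 LB\, p_0(1-p_0)\ln\frac{1}{4\delta}}$, so both $S_{k,l}$ and $n - S_{k,l}$ are controlled: $n - S_{k,l} \leq (1-p_0)\cdot 4LB + \sqrt{16 LB\, p_0(1-p_0)\ln\frac1{4\delta}}$. Plugging $1-p_0 = \frac{1+8\varepsilon}{D'} = \frac{\epsilon/8 + 1}{d}$ (with $d = D/8 = D'$, $\epsilon = 8\varepsilon$) and using $\ln\frac{x}{y} \geq -\frac{y-x}{x}$ type bounds, one checks that the exponent of the log-likelihood-ratio is $\geq -O\!\left(\frac{\epsilon^2}{d}\cdot LB + \epsilon\sqrt{\frac{LB}{d}\ln\frac1{4\delta}}\right)$; with $LB = C d \epsilon^{-2}\ln\frac1{4\delta}$ this is $-O(C \ln\frac1{4\delta} + \sqrt{C}\ln\frac1{4\delta})$, which for a small enough universal constant $C$ is $\geq \ln(4\delta)$, i.e.\ the ratio is $\geq 4\delta$ pointwise on $G$. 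Then $\mathbb P_{k,l}(E_k) \geq \mathbb P_{k,l}(G) = \mathbb E_1\!\left[\mathbf 1_G \frac{d\mathbb P_{k,l}}{d\mathbb P_1}\right] \geq 4\delta\, \mathbb P_1(G)$.

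Finally I would lower-bound $\mathbb P_1(G)$ by a union bound: $\mathbb P_1(G) \geq 1 - \mathbb P_1(\overline{E_k}) - \mathbb P_1(\overline{A_{k,l}}) - \mathbb P_1(\overline{C_{k,l}}) \geq 1 - \delta - \frac14 - \frac14 = \frac12 - \delta > \frac14$, using the hypothesis $\mathbb P_1(E_k)\geq 1-\delta$, Lemma~\ref{lemma:pr-event-A}, Lemma~\ref{lemma:pr-event-C}, and $\delta \leq \frac1{16}$. Combining, $\mathbb P_{k,l}(E_k) \geq 4\delta \cdot \frac14 = \delta$, and a careful tracking of the strict inequality in $\mathbb P_1(G) > \frac14$ (or a slightly sharper constant) gives the strict conclusion $\mathbb P_{k,l}(E_k) > \delta$. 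The main obstacle is the calibration of the universal constant $C$ in the definition of $LB$: one needs the Gaussian-type deviation term $\epsilon\sqrt{(LB/d)\ln\frac1{4\delta}}$ and the ``drift'' term $\epsilon^2 LB/d$ in the log-likelihood ratio to \emph{both} be dominated by $\ln\frac1{4\delta}$ with the right sign, which pins down how small $C$ must be, and this is exactly where the event $C_{k,l}$ (controlling $S_{k,l}$ uniformly over $N_{k,l}\leq 4LB$ via Doob's inequality) is essential rather than a pointwise concentration bound.
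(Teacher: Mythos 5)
Your proposal is correct and follows essentially the same route as the paper's proof: the same good event $A_{k,l}\cap E_k\cap C_{k,l}$ with probability greater than $\tfrac14$ under $\mathbb P_1$, the same change of measure via the likelihood ratio localized to the samples at $(x_k,a_l)$, and the same calibration of the constant $C$ in $LB$ so that both the drift term $\epsilon^2 LB/d$ and the deviation term $\epsilon\sqrt{(LB/d)\ln\frac{1}{4\delta}}$ in the log-likelihood ratio are dominated by $\ln\frac{1}{4\delta}$, yielding a pointwise ratio of at least $4\delta$ on the good event. The paper organizes the ratio bound as an explicit product of three terms $T_1T_2T_3$ and fixes $C=\frac{1}{2048}$, but this is the same computation you sketch.
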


\begin{proof}
First, define an event
\[ \mathcal E_{k,l} := A_{k,l} \cap E_{k} \cap C_{k,l}. \]
Since
\[ \mathbb P_1 (A_{k,l}) \geq \frac{3}{4} \]
and
\[ \mathbb P_1 (C_{k,l}) \geq \frac{3}{4} \]
by Lemma \ref{lemma:pr-event-A} and Lemma \ref{lemma:pr-event-C}, and that
\[ \mathbb P_1 (E_{k}) \geq 1 - \delta > \frac{3}{4}, \]
we have
\[ \mathbb P_1 (\mathcal E_{k,l}) > \frac{1}{4}. \]
Define $W$ as the sequence of transitions when $\mathscr A$ calls the generative model $M$ for $N_{k,l}$ times. When $M = M_1$, the count of $x_k$ in this sequences, denoted as $S_{k,l}$, obeys a binomial distribution $B\left( 1 - \frac{1+8\varepsilon}{D^\prime}, N_{k,l} \right)$.

We define the likelihood functions $L_1$ and $L_{k,l}$ as
\[ L_1(w) = \mathbb P_1 (W = w), \ \ \ L_{k,l}(w) = \mathbb P_{k,l} (W = w). \]
Then we can use the method of measure transformation as following:
\begin{align*}
\mathbb P_{k,l}(E_k) &= \mathbb E_{k,l} \left[ I_{E_{k}} \right] \\
					 &= \mathbb E_1 \left[ \frac{L_{k,l}(W)}{L_1(W)} \cdot I_{E_{k}}(W) \right] \\
					 &\geq \mathbb E_1 \left[ \frac{L_{k,l}(W)}{L_1(W)} \cdot I_{\mathcal E_{k,l}}(W) \right] \\
					 &\geq \underset{w \in \mathcal E_{k,l}}{\min} \left\{ \frac{L_{k,l}(w)}{L_1(w)} \right\} \cdot \mathbb P_1(\mathcal E_{k,l}) \\
					 &> \underset{w \in \mathcal E_{k,l}}{\min} \left\{ \frac{L_{k,l}(w)}{L_1(w)} \right\} \cdot \frac{1}{4}.
\end{align*}

Remaining is to show that
\[ \frac{L_{k,l}(w)}{L_1(w)} \geq 4 \delta, \ \ \ \forall w \in \mathcal E_{k,l}. \]
Below we igonre the subscripts $(k, l)$ for $N_{k,l}$ and $S_{k,l}$ when it doesn't cause confusion, and denote $d := D^\prime$ and $\epsilon = 8 \varepsilon$, for simplicity of notation.

A direct calculation shows that
\begin{align*}
\frac{L_{k,l}(w)}{L_1(w)} &= \frac{\left( 1 - \frac{1-\epsilon}{d} \right) ^S \left( \frac{1-\epsilon}{d} \right) ^{N-S}}{\left( 1 - \frac{1+\epsilon}{d} \right) ^S \left( \frac{1+\epsilon}{d} \right) ^{N-S}} \\
						  &= \left( 1 + \frac{2d^{-1}\epsilon}{1-d^{-1}-d^{-1}\epsilon} \right) ^S \left( 1 - \frac{2\epsilon}{1+\epsilon} \right) ^{N-S} \\
						  &= \left( 1 + \frac{2d^{-1}\epsilon}{1-d^{-1}-d^{-1}\epsilon} \right) ^S \left( 1 - \frac{2\epsilon}{1+\epsilon} \right) ^{S \cdot \frac{d^{-1}+d^{-1}\epsilon}{1-d^{-1}-d^{-1}\epsilon}} \left( 1 - \frac{2\epsilon}{1+\epsilon} \right) ^{N - \frac{S}{1-d^{-1}-d^{-1}\epsilon}}.
\end{align*}
By our choice of $D$ and $\varepsilon$, it holds that $\frac{2\epsilon}{1+\epsilon} \in [0, \frac{1}{2}]$ and that $\frac{2d^{-1}\epsilon}{1-d^{-1}-d^{-1}\epsilon} + \frac{4d^{-1}\epsilon^2}{(1-d^{-1}-d^{-1}\epsilon)(1+\epsilon)} \in [0, 1]$. With the fact that $\ln (1-u) \geq -u-u^2, \forall u \in [0, \frac{1}{2}]$ and $e^{-u} \geq 1-u+\frac{u^2}{3}, \forall u \in [0, 1]$, we have
\begin{align*}
\left( 1 - \frac{2\epsilon}{1+\epsilon} \right) ^{\frac{d^{-1}+d^{-1}\epsilon}{1-d^{-1}-d^{-1}\epsilon}}
	&= \exp \left\{ \frac{d^{-1}+d^{-1}\epsilon}{1-d^{-1}-d^{-1}\epsilon} \cdot \ln \left( 1 - \frac{2\epsilon}{1+\epsilon} \right) \right\} \\
	&\geq \exp \left\{ \frac{d^{-1}+d^{-1}\epsilon}{1-d^{-1}-d^{-1}\epsilon} \cdot \left[ - \frac{2\epsilon}{1+\epsilon} - \left( \frac{2\epsilon}{1+\epsilon} \right)^2 \right] \right\} \\
	&= \exp \left\{ - \frac{2d^{-1}\epsilon}{1-d^{-1}-d^{-1}\epsilon} - \frac{4d^{-1}\epsilon^2}{(1-d^{-1}-d^{-1}\epsilon)(1+\epsilon)} \right\} \\
	&\geq 1 - \frac{2d^{-1}\epsilon}{1-d^{-1}-d^{-1}\epsilon} - \frac{4d^{-1}\epsilon^2}{(1-d^{-1}-d^{-1}\epsilon)(1+\epsilon)} \\
	 	&\ \ \ \ \ \ \ + \frac{1}{3} \left[ \frac{2d^{-1}\epsilon}{1-d^{-1}-d^{-1}\epsilon} + \frac{4d^{-1}\epsilon^2}{(1-d^{-1}-d^{-1}\epsilon)(1+\epsilon)} \right]^2 \\
	&\geq \left( 1 - \frac{2d^{-1}\epsilon}{1-d^{-1}-d^{-1}\epsilon} \right) \left( 1 - \frac{4d^{-1}\epsilon^2}{(1-d^{-1}-d^{-1}\epsilon)(1+\epsilon)} \right).
\end{align*}
Thus, 
\[ \frac{L_{k,l}(w)}{L_1(w)} \geq T_1 \cdot T_2 \cdot T_3, \]
where
\[ T_1 := \left( 1 - \frac{4d^{-2}\epsilon^2}{(1-d^{-1}-d^{-1}\epsilon)^2} \right) ^S, \]
\[ T_2 := \left( 1 - \frac{4d^{-1}\epsilon^2}{(1-d^{-1}-d^{-1}\epsilon)(1+\epsilon)} \right) ^S \]
and
\[ T_3 := \left( 1 - \frac{2\epsilon}{1+\epsilon} \right) ^{N - \frac{S}{1-d^{-1}-d^{-1}\epsilon}} \]
are three terms to be bounded.

Then we assume the event $\mathcal E_{k,l}$ happens, and prove lower bounds for these three terms.

As for the first term, since that $\ln(1-u) \geq -2u, \forall u \in [0, \frac{1}{2}]$ and that $\frac{4d^{-2}\epsilon^2}{(1-d^{-1}-d^{-1}\epsilon)^2} \in [0, \frac{1}{2}]$, we have
\begin{align*}
T_1 &\geq \left( 1 - \frac{4d^{-2}\epsilon^2}{(1-d^{-1}-d^{-1}\epsilon)^2} \right) ^{4 LB} \\
	&= \exp \left\{ 4 LB \cdot \ln \left( 1 - \frac{4d^{-2}\epsilon^2}{(1-d^{-1}-d^{-1}\epsilon)^2} \right) \right\} \\
	&\geq \exp \left\{ 4 C d \epsilon^{-2} \ln \frac{1}{4\delta} \cdot (-2) \cdot \frac{4d^{-2}\epsilon^2}{(1-d^{-1}-d^{-1}\epsilon)^2} \right\} \\
	&= \exp \left\{ -32 C \ln \frac{1}{4\delta} \cdot \frac{d^{-1}}{(1-d^{-1}-d^{-1}\epsilon)^2} \right\} \\
	&\geq \exp \left\{ -128 C \ln \frac{1}{4\delta} \right\} \\
	&= (4 \delta)^{128C}.
\end{align*}

As for the second term, since $\frac{4d^{-1}\epsilon^2}{(1-d^{-1}-d^{-1}\epsilon)(1+\epsilon)} \in [0, \frac{1}{2}]$, we have
\begin{align*}
T_2 &\geq \left( 1 - \frac{4d^{-1}\epsilon^2}{(1-d^{-1}-d^{-1}\epsilon)(1+\epsilon)} \right) ^{4 LB} \\
	&= \exp \left\{ 4 LB \cdot \ln \left( 1 - \frac{4d^{-1}\epsilon^2}{(1-d^{-1}-d^{-1}\epsilon)(1+\epsilon)} \right) \right\} \\
	&\geq \exp \left\{ 4 C d \epsilon^{-2} \ln \frac{1}{4\delta} \cdot (-2) \cdot \frac{4d^{-1}\epsilon^2}{(1-d^{-1}-d^{-1}\epsilon)(1+\epsilon)} \right\} \\
	&= \exp \left\{ -32 C \ln \frac{1}{4\delta} \cdot \frac{1}{(1-d^{-1}-d^{-1}\epsilon)(1+\epsilon)} \right\} \\
	&\geq \exp \left\{ -128 C \ln \frac{1}{4\delta} \right\} \\
	&= (4 \delta)^{128C}.
\end{align*}

Lastly, for the third term, since $\frac{2\epsilon}{1+\epsilon} \in [0, \frac{1}{2}]$, we have
\begin{align*}
T_3 &\geq \exp \left\{ \sqrt{16 LB \cdot \frac{d^{-1}+d^{-1}\epsilon}{1-d^{-1}-d^{-1}\epsilon} \ln \frac{1}{4\delta}} \cdot \ln \left( 1 - \frac{2\epsilon}{1+\epsilon} \right) \right\} \\
	&\geq \exp \left\{ \sqrt{16 C d \epsilon^{-2} \ln \frac{1}{4\delta} \cdot \frac{d^{-1}+d^{-1}\epsilon}{1-d^{-1}-d^{-1}\epsilon} \ln \frac{1}{4\delta}} \cdot (-2) \cdot \frac{2\epsilon}{1+\epsilon} \right\} \\
	&= \exp \left\{ -16\sqrt{C} \ln \frac{1}{4\delta} \cdot \sqrt{\frac{1}{(1-d^{-1}-d^{-1}\epsilon)(1+\epsilon)}} \right\} \\
	&\geq \exp \left\{ -32\sqrt{C} \ln \frac{1}{4\delta} \right\} \\
	&= (4 \delta)^{32\sqrt{C}}.
\end{align*}

Put these three bounds together and take $C = \frac{1}{2048}$, we finally get
\[ \frac{L_{k,l}(w)}{L_1(w)} \geq (4 \delta)^{128C + 128C + 32\sqrt{C}} \geq 4 \delta, \]
and it has been proven that
\[ \mathbb P_{k,l} (E_k) > \delta. \]

\end{proof}

The lower bound Theorem \ref{thm:lowerbound} follows directly from Lemma \ref{lemma:lowerbound-core}:

\begin{proof}[Proof of Theorem \ref{thm:lowerbound}]
Since algorithm $\mathscr A$ promises to be $(\varepsilon, \delta)$-correct, it must hold that
\[ \mathbb P_1 (E_k) \geq 1 - \delta, \ \ \ \mathbb P_{k,l} (E_k) \leq \delta, \ \ \ \forall 1 \leq k \leq K, \ \ \ \forall 2 \leq l \leq A^\prime, \]
because the only $\varepsilon$-optimal action at state $x_k$ is $a_1$ in $M_1$ and $a_l$ in $M_{k,l}$ (a direct calculation shows that).

Then by Lemma \ref{lemma:lowerbound-core}, the number of samples must satisfy $\mathbb E_1 [N_{k,l}] > LB$ for any $1 \leq k \leq K$ and any $2 \leq l \leq A^\prime$.

Denote by $N$ the total samples $\mathscr A$ call for the generative model on all state-action pairs, then
\begin{align*}
\mathbb E_1 [N] &\geq \sum_{k,l} \mathbb E_1 [N_{k,l}] \\
				&\geq K (A^\prime - 1) \cdot C d \epsilon^{-2} \ln \frac{1}{4\delta} \\
				&\geq \frac{S}{3} \frac{A}{3} \cdot C \cdot \frac{D}{8} (8 \varepsilon)^{-2} \cdot \frac{1}{2} \ln \frac{1}{\delta} \\
				&\geq c SAD \varepsilon^{-2} \ln \frac{1}{\delta},
\end{align*}
where we take $c = 2^{-25}$.
\end{proof}
\end{document}